\theoremstyle{plain}
\newtheorem{theorem}{Theorem}[section]
\newtheorem{lemma}[theorem]{Lemma}
\theoremstyle{definition}
\theoremstyle{remark}
\icmltitlerunning{From Human Labels to Literature: Semi-Supervised Learning of NMR Chemical Shifts at Scale}
\begin{document}

\twocolumn[
  \icmltitle{From Human Labels to Literature: Semi-Supervised Learning of NMR Chemical Shifts at Scale}



  \icmlsetsymbol{equal}{*}

\begin{icmlauthorlist}
  \icmlauthor{Yongqi Jin}{equal,pku,dp}
  \icmlauthor{Yecheng Wang}{equal,pku}
  \icmlauthor{Jun-Jie Wang}{dp,chem}
  \icmlauthor{Rong Zhu}{chem,aisi}
  \icmlauthor{Guolin Ke}{dp}
  \icmlauthor{Weinan E}{pku,aisi,cmlr}
\end{icmlauthorlist}

\icmlaffiliation{pku}{School of Mathematical Sciences, Peking University, Beijing, China}
\icmlaffiliation{dp}{DP Technology, Beijing, China}
\icmlaffiliation{cmlr}{Center for Machine Learning Research, Peking University, Beijing, China}
\icmlaffiliation{aisi}{AI for Science Institute, Beijing, China}
\icmlaffiliation{chem}{College of Chemistry and Molecular Engineering, Peking University, Beijing, China}

  \icmlcorrespondingauthor{Yongqi Jin}{yongqijin@stu.pku.edu.cn}
  \icmlcorrespondingauthor{Weinan E}{weinan@pku.edu.cn}

  \icmlkeywords{Computational chemistry, Data-centric machine learning, Semi-supervised learning}

  \vskip 0.3in
]



\printAffiliationsAndNotice{}  

\begin{abstract}
Accurate prediction of nuclear magnetic resonance (NMR) chemical shifts is fundamental to spectral analysis and molecular structure elucidation, yet existing machine learning methods rely on limited, labor-intensive atom-assigned datasets.
We propose a semi-supervised framework that learns NMR chemical shifts from millions of literature-extracted spectra without explicit atom-level assignments, integrating a small amount of labeled data with large-scale unassigned spectra.
We formulate chemical shift prediction from literature spectra as a permutation-invariant set supervision problem, and show that under commonly satisfied conditions on the loss function, optimal bipartite matching reduces to a sorting-based loss, enabling stable large-scale semi-supervised training beyond traditional curated datasets.
Our models achieve substantially improved accuracy and robustness over state-of-the-art methods and exhibit stronger generalization on significantly larger and more diverse molecular datasets.
Moreover, by incorporating solvent information at scale, our approach captures systematic solvent effects across common NMR solvents for the first time.
Overall, our results demonstrate that large-scale unlabeled spectra mined from the literature can serve as a practical and effective data source for training NMR shift models, suggesting a broader role of literature-derived, weakly structured data in data-centric AI for science. 
Our code is available at \url{https://github.com/YongqiJin/NMRNetplusplus}.

\end{abstract}

\section{Introduction}

Nuclear magnetic resonance (NMR) spectroscopy is a cornerstone technique for molecular structure elucidation, providing detailed information on the chemical environments of individual atoms~\cite{clayden2012organic, skoog2019textbook}. Among the various NMR observables, chemical shifts are particularly important for assigning spectral peaks and interpreting complex molecular structures~\cite{smith2010assigning, chen2020review, hu2023machine, jin2025nmr}. As a result, accurately predicting chemical shifts is a fundamental task in molecular science and related fields~\cite{wishart199412, tsai2022ml}. In computational terms, this task can be formulated as: given a molecular structure, predict the chemical shift of each NMR-active nucleus, typically for hydrogen (\( ^1 \)H) and carbon (\( ^{13} \)C) nuclei.

While traditional methods such as density functional theory (DFT)~\cite{wolinski1990giao} and HOSE codes~\cite{bremser1978hose} provide valuable predictions, they struggle to balance accuracy with computational efficiency. 
In recent years, machine learning (ML)-based approaches have emerged as powerful alternatives, offering substantial speedups over DFT calculations while achieving high prediction accuracy~\cite{han2022SGNN, zou2023DetaNet, chen2024gt, xu2025toward}. 
However, these methods predominantly rely on manually curated datasets with explicit atom-level assignments, such as NMRShiftDB2~\cite{kuhn2012chemical, kuhn2015NMRShiftDB2}, which are limited in size and are labor-intensive to generate. 
Moreover, these datasets fail to capture the effects of solvents on chemical shifts, a critical aspect of real-world NMR spectra~\cite{buckingham1960solvent}.

While the scarcity of human-labeled datasets remains a significant challenge, scientific literature offers vast quantities of NMR spectra, often accompanied by experimental details including solvent information. 
Recent advancements in document parsing technologies~\cite{wang2024mineru} and Optical Chemical Structure Recognition (OCSR)~\cite{fang2024molparser} have made it increasingly feasible to extract large-scale NMR data from literature~\cite{wang2025nmrextractor, wang2025nmrexp}. 
However, these NMR data often lack atom-level assignments, which makes them difficult to incorporate into traditional supervised learning frameworks. 
This emerging resource presents an exciting opportunity to apply weakly supervised learning techniques, enabling models to leverage the wealth of unassigned chemical shift data.

To address these challenges, we introduce a weakly supervised learning that leverages unlabeled literature data during training. Additionally, we integrate solvent information as input to capture solvent effects in chemical shift prediction. This framework overcomes the limitations of previous supervised machine learning models, which were constrained by the scalability of labeled experimental data, and achieves superior performance in both prediction accuracy and generalizability.

\textbf{Our contributions} can be summarized as follows:
\begin{itemize}
\item We formulate NMR chemical shift prediction from unassigned literature spectra as a permutation-invariant set supervision problem, and propose a semi-supervised learning framework that jointly leverages atom-assigned data and large-scale unassigned spectra.
\item We show that under mild conditions on the loss function, the optimal bipartite matching between predicted and observed chemical shift sets reduces to a sorting-based loss, enabling stable and scalable training with millions of weakly supervised samples.
\item We curate a large-scale (millions of spectra) literature-extracted NMR chemical shift dataset, ShiftDB-Lit, containing molecular structures, chemical shift sets and solvents, covering $^1$H, $^{13}$C, and multiple heteroatoms.
\item Using this dataset, we incorporate experimental solvent information into large-scale chemical shift learning, and demonstrate that global solvent conditioning captures systematic solvent-induced biases in NMR prediction. 
We further provide labeled benchmarks and baselines for heteroatom shifts ($^{19}$F, $^{31}$P, $^{11}$B, $^{29}$Si).
\end{itemize}

Overall, this work demonstrates that unassigned literature NMR spectra can be transformed into an effective learning signal through permutation-invariant supervision. 
By leveraging large-scale weakly supervised data together with limited atom-assigned labels, our approach enables more accurate, solvent-aware, and multi-element chemical shift prediction. 
More broadly, this study highlights the potential of literature-derived datasets—despite their lack of explicit standardization—as a powerful and largely untapped resource for scientific machine learning.

\begin{table*}[t]
    \centering
    \caption{Dataset Information}
    \label{tab:dataset}
    
    \begin{tabular}{cccccc}
    \toprule
    Dataset & Nuclei & Annotation Type & Solvent Included & Data Quantity\footnotemark[1] \\
    \midrule
    \multirow{2}{*}{NMRShiftDB2} & $^{1}\text{H}$ & Assigned & \multirow{2}{*}{No} & 12800 \\ 
     & $^{13}\text{C}$ & Assigned &  & 26859 \\ 
    \midrule
    \multirow{6}{*}{ShiftDB-Lit~(Ours)} & $^{1}\text{H}$ & Unassigned & \multirow{6}{*}{Yes} & 898422 \\ 
     & $^{13}\text{C}$ & Unassigned &  & 704373 \\
     & $^{19}\text{F}$ & Assigned\footnotemark[2] &  & 126961 \\
     & $^{31}\text{P}$ & Assigned &  & 26980 \\
     & $^{11}\text{B}$ & Assigned &  & 12902 \\
     & $^{29}\text{Si}$ & Assigned &  & 1785 \\
    \bottomrule
    \end{tabular}
\end{table*}

\footnotetext[1]{The quantity reports the number of molecules.}
\footnotetext[2]{The ``assigned'' annotation for $^{19}\text{F}$, $^{31}\text{P}$, $^{11}\text{B}$, and $^{29}\text{Si}$ is due to each molecule containing only one equivalent NMR-detectable nucleus (e.g., $^{19}\text{F}$) and a single chemical shift in the spectrum.}

\section{Related Work}

\textbf{Chemical Shift Prediction.} 
Traditional methods primarily rely on empirical induction or quantum mechanical calculations to predict chemical shifts based on atomic environments. 
Density Functional Theory (DFT)~\cite{wolinski1990giao} provides a more accurate quantum mechanical approach, but it is computationally intensive and not feasible for large datasets or high-throughput applications.  
The HOSE code methods~\cite{bremser1978hose} use predefined rules based on local atomic environments to estimate chemical shifts. While fast, these methods can lack accuracy for complex molecules.  
With the rise of machine learning, supervised models based on molecular graphs, including GNNs and equivariant MPNNs, have achieved substantially improved accuracy over rule-based methods~\cite{jonas2019rapid, kwon2020FCG, han2022SGNN, zou2023DetaNet}.
More recently, GT-NMR~\cite{chen2024gt} and NMRNet~\cite{xu2025toward} adopted graph Transformers~\cite{vaswani2017attention} to represent the relationships between atoms in molecules. The latter uses an SE(3)-equivariant Transformer, taking molecular spatial coordinates as input, further enhancing the prediction accuracy of ML methods.
Despite architectural advances, all existing ML-based chemical shift predictors fundamentally rely on atom-level assignments during training, implicitly assuming a one-to-one correspondence between atoms and spectral peaks. 
This assumption breaks down for the vast majority of literature-reported NMR spectra, motivating alternative supervision paradigms.

\textbf{NMR Datasets.}  
The largest experimental chemical shift dataset is NMRShiftDB2~\cite{kuhn2015NMRShiftDB2}, which has collected about 40,000 \(^1\)H and \(^{13}\)C NMR spectrum.  
QM9-NMR~\cite{gupta2021qm9nmr} contains over 130,000 molecules with their calculated chemical shifts using DFT methods.  
With advancements in literature parsing tools, some methods are now attempting to extract large-scale NMR data from scientific literature.  
NMRBank~\cite{wang2025nmrextractor} utilizes a fine-tuned large language model to identify and extract relevant data, while NMRexp~\cite{wang2025nmrexp} employs a pipeline combining PDF parsing, Optical Chemical Structure Recognition~(OCSR), and LLM-based extraction methods, obtaining a million-scale dataset of unassigned molecular NMR data.
While literature-extracted datasets are orders of magnitude larger, their lack of atom-level correspondence renders them incompatible with standard supervised objectives used in existing models.

\textbf{Semi-Supervised and Weakly Supervised Learning.}

Limited labeled data, high acquisition costs, and annotation difficulty are common challenges across many domains. 
Semi-supervised and weakly supervised learning aim to leverage unlabeled or weakly labeled data to improve model performance~\cite{van2020survey, zhou2018brief}. 
In scientific domains, the scarcity of high-quality, standardized data has motivated the use of weakly annotated or unlabeled datasets to train models effectively. Such methods have been explored in tasks including Quantitative Structure-Activity Relationship (QSAR) modeling~\cite{chen2021chemical, kwon2022harnessing}, synthesis procedure classification~\cite{huo2019semi}, and drug fingerprint identification~\cite{shi2023weakly}.

\textbf{Incorporating Solvent Information for Prediction.}  
Traditional approaches include explicit and implicit solvent models. Explicit models~\cite{mark2001structure} represent each solvent molecule and use molecular dynamics (MD) simulations to capture solute–solvent interactions accurately, but at high computational cost. Implicit models, such as PCM~\cite{mennucci2012polarizable} and COSMO~\cite{klamt2011cosmo}, treat the solvent as a continuous medium, reducing computational demands while still requiring substantial resources. In machine learning, solvent information is typically encoded as descriptors~\cite{li2025transpeaknet}. 
However, in NMR chemical shift prediction, incorporating solvent information remains largely unexplored due to the scarcity of standardized datasets with solvent labels.

\section{Methods} \label{sec:method}

\subsection{Data Source and Preprocessing}

We constructed our chemical shift dataset from NMRexp~\cite{wang2025nmrexp}, a large-scale collection of NMR spectra extracted from peer-reviewed chemistry publications. The dataset comprises millions of molecular-spectral entries, each linking a molecular structure with its reported NMR spectra. The original textual NMR data were parsed using regular expressions and transformed into sets of chemical shifts, represented as multisets to capture all observed peaks.

To ensure data quality, we applied a systematic three-stage filtering procedure: \textbf{molecular validity checks}, \textbf{NMR data validity checks}, and \textbf{consistency checks}. 
Molecular validity checks removed chemically incorrect structures, including free radicals, isotopes, invalid SMILES, or uncommon elemental compositions. 
NMR data validity checks enforced monotonicity, verified chemical shift ranges to account for potential reporting errors in the literature, and filtered out peaks with excessively broad widths indicative of low-resolution or noisy measurements.
Consistency checks ensured alignment between the number of atoms and reported chemical shifts, accounting for ambiguities in carbon spectra where peak integration is unavailable. 
Finally, we generated 3D molecular conformations using \texttt{RDKit}~\cite{landrum2016rdkit} to provide structural information for molecular modeling.

Moreover, the large volume of literature-extracted data enables large-scale training of experimental heteroatom chemical shifts.  
Since heteroatoms are typically present in small numbers within molecules, we retained only molecules containing a single equivalent heteroatom, where the corresponding NMR chemical shift set contains a single shift. These data pairs can therefore be treated as labeled data.  

Detailed data processing procedures are provided in Appendix~\ref{app:data}, and a comparison between our dataset and previous ones is presented in Table~\ref{tab:dataset}.

\begin{figure*}[t]
    \centering
    \includegraphics[width=1\linewidth]{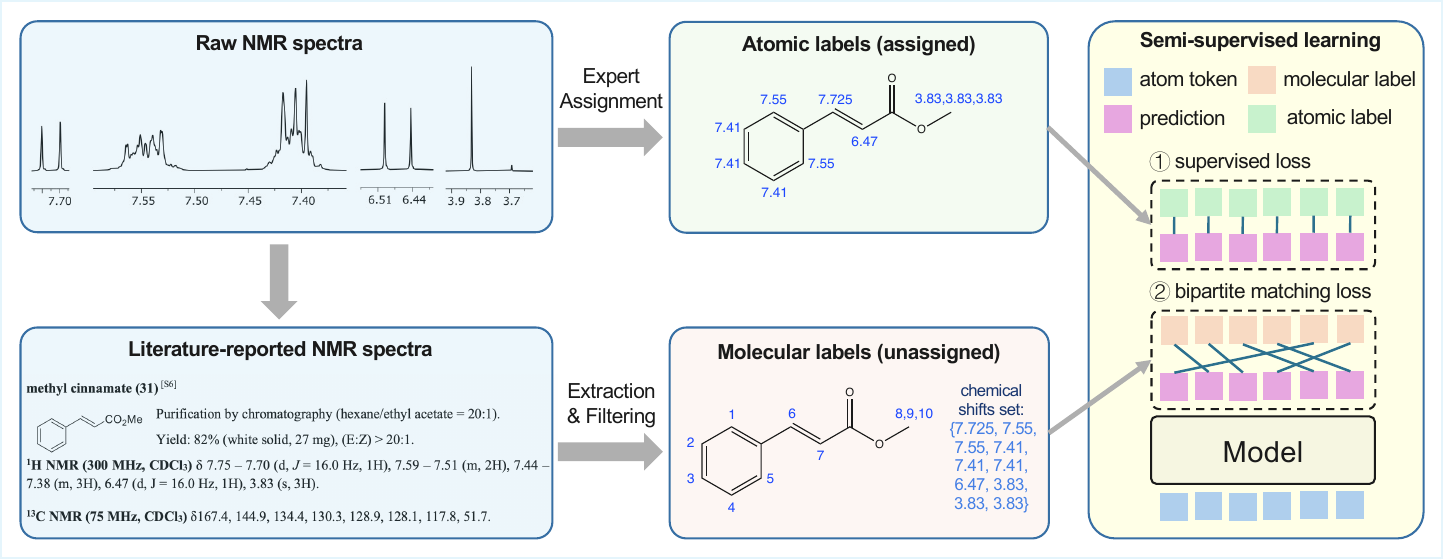}
    \vspace{-1em}
    \caption{Framework of semi-supervised learning for chemical shift prediction models.}
    \label{fig:framework}
\end{figure*}

\subsection{Semi-supervised Training Framework} \label{subsec:loss}

We jointly train the model using both atom-assigned (labeled) and unassigned (unlabeled) NMR spectra, as illustrated in Figure~\ref{fig:framework}. 
The overall objective consists of a supervised atom-level loss for labeled data and a weakly-supervised molecule-level loss for unassigned data.

\paragraph{Supervised atom-level loss.}
For a molecule with atom-level assignments, let $\{s_i\}_{i=1}^{N}$ denote the predicted chemical shifts and $\{\hat{s}_i\}_{i=1}^{N}$ the corresponding ground-truth labels.
The supervised loss is computed directly at the atom level:
\begin{equation}
\mathcal{L}_{\text{atom}} = \sum_{i=1}^{N} l(s_i, \hat{s}_i),
\label{eq:loss-atom}
\end{equation}
where $l(\cdot,\cdot)$ denotes a pointwise regression loss such as MAE or MSE.

\paragraph{Weakly-supervised molecule-level loss.}
For unassigned spectra, the reported chemical shifts form an unordered set and do not correspond directly to specific atoms.
We therefore formulate learning from unassigned data as a \emph{permutation-invariant set supervision} problem.

Specifically, we define the \textbf{weakly-supervised (molecule-level) loss} as a \textbf{bipartite matching loss}, computed as the supervised loss under the optimal assignment between predicted and observed shifts:
\begin{align}
\mathcal{L}_{\text{mol}} 
    &= \min_{\sigma \in \mathfrak{S}_N} \sum_{i=1}^{N} l(s_i, \hat{s}_{\sigma_i}),
    \label{eq:loss-mol-1}
\end{align}
where $\mathfrak{S}_N$ denotes the set of all permutations of $N$ elements.
In general, this corresponds to solving a bipartite matching problem, which can be computed using the Hungarian algorithm~\cite{kuhn1955hungarian,Munkres1957hungarian}.

\paragraph{Permutation-invariant loss equivalence.}
When the loss function satisfies
\begin{equation}
l(x,y) = f(|x-y|),
\end{equation}
where $f$ is a monotonically increasing and convex function, the optimal assignment admits a closed-form solution.
In particular, the minimum in~\eqref{eq:loss-mol-1} is obtained by sorting the predicted and observed shifts and matching them in order:
\begin{align}
\mathcal{L}_{\text{mol}}
    &= \sum_{i=1}^{N} l(s_{r_i}, \hat{s}_{t_i}),
    \label{eq:loss-mol-2}
\end{align}
where $\{r_i\}$ and $\{t_i\}$ denote the indices that sort $\{s_i\}$ and $\{\hat{s}_i\}$ in ascending order, respectively.

This result transforms a combinatorial matching problem into a deterministic, permutation-invariant loss, enabling stable and efficient training on large-scale unassigned spectra.
The above condition holds for commonly used regression losses, including MAE, MSE, and Huber loss.
A formal proof is provided in Appendix~\ref{apd:proof}.

\paragraph{Overall objective.}
The final training objective combines the supervised and weakly-supervised losses:
\begin{equation}
\mathcal{L}_{\text{total}} = \mathcal{L}_{\text{atom}} + \lambda \, \mathcal{L}_{\text{mol}},
\label{eq:loss-total}
\end{equation}
where $\lambda$ controls the relative contribution of weak supervision.

In the training process, we use batch sizes $B_1$ for supervised and $B_2 > B_1$ for weakly-supervised losses to reduce the variance of the latter.
The total loss for a single batch is computed as:
\begin{equation}
\mathcal{L}_{\text{batch}} = \frac{1}{N_1} \sum_{b=1}^{B_1} \mathcal{L}_{\text{atom}}^b + \lambda \cdot \frac{1}{N_2} \sum_{b=1}^{B_2} \mathcal{L}_{\text{mol}}^b,
\end{equation}
where \( N_1 \) and \( N_2 \) represent the total number of atoms whose chemical shifts are to be predicted in the labeled dataset batch and the unlabeled dataset batch, respectively.

\subsection{Model Architecture}

We adopt NMRNet~\cite{xu2025toward} as our baseline, a state-of-the-art deep learning architecture for NMR chemical shift prediction. NMRNet employs an SE(3)-equivariant Transformer to model the spatial relationships among atoms in a molecule, followed by a regression head that outputs each atom's chemical shift. We maintain the original model configuration and training framework to ensure fair comparisons and consistent evaluation across experiments. For additional details, see Appendix~\ref{app:train} and~\ref{app:setting}.

\begin{table*}[ht]
\centering
\caption{Performance comparison of different methods. \textbf{Bold} values represent the best results, while \underline{underlined} values indicate the original best results.}
\small
\begin{tabular}{lccccccccc}
\toprule
\multirow{2}{*}{\textbf{Method}} & \multicolumn{2}{c}{\textbf{NMRShiftDB2 ($L_{\text{atom}}$)}} & \multicolumn{2}{c}{\textbf{NMRShiftDB2 ($L_{\text{mol}}$)}} & \multicolumn{2}{c}{\textbf{ShiftDB-Lit ($L_{\text{mol}}$)}} \\
\cmidrule(r){2-3} \cmidrule(r){4-5} \cmidrule(r){6-7} 
 & \textbf{MAE} & \textbf{RMSE} & \textbf{MAE} & \textbf{RMSE} & \textbf{MAE} & \textbf{RMSE} \\
\midrule
\textbf{\( ^1 \)H} & & & & & & \\
\midrule
HOSE\footnotemark[3] & 0.33 & -- & -- & -- & -- & -- \\
GCN\footnotemark[3] & 0.28 & -- & -- & -- & -- & -- \\
FCG\footnotemark[3] & 0.224 & -- & -- & -- & -- & -- \\
SGNN\footnotemark[3] & 0.216 & 0.484 & -- & -- & -- & -- \\
GT-NMR & \textcolor{gray}{(0.158)}\footnotemark[4] & \textcolor{gray}{(0.293)}\footnotemark[4] & -- & -- & -- & -- \\
NMRNet & & & & & & \\
\quad \textit{Baseline} & \underline{0.1972} & \underline{0.4564} & \underline{0.1761} & \underline{0.3896} & \underline{0.1395} & \underline{0.2790} \\
\quad \textit{+ Semi-supervised (Ours)} & \textbf{0.1709} & \textbf{0.4337} & \textbf{0.1492} & \textbf{0.3620} & \textbf{0.0559} & \textbf{0.1846} \\
& (\(\boldsymbol{\downarrow} \mathbf{13.4\%}\)) & (\(\boldsymbol{\downarrow} \mathbf{5.0\%}\)) & (\(\boldsymbol{\downarrow} \mathbf{15.3\%}\)) & (\(\boldsymbol{\downarrow} \mathbf{7.1\%}\)) & (\(\boldsymbol{\downarrow} \mathbf{59.9\%}\)) & (\(\boldsymbol{\downarrow} \mathbf{33.8\%}\)) \\

\midrule
\textbf{\( ^{13} \)C} & & & & & & \\
\midrule
HOSE\footnotemark[3] & 2.85 & -- & -- & -- & -- & -- \\
GCN\footnotemark[3] & 1.43 & -- & -- & -- & -- & -- \\
FCG\footnotemark[3] & 1.355 & -- & -- & -- & -- & -- \\
SGNN\footnotemark[3] & 1.271 & 2.232 & -- & -- & -- & -- \\
GT-NMR & 1.189 & 2.206 & -- & -- & -- & -- \\
NMRNet & \\
\quad \textit{Baseline} & \underline{1.1518} & \underline{2.1398} & \underline{1.0143} & \underline{1.8513} & \underline{1.2591} & \underline{2.9207} \\
\quad \textit{+ Semi-supervised (Ours)} & \textbf{0.9270} & \textbf{1.9128} & \textbf{0.7765} & \textbf{1.5629} & \textbf{0.5060} & \textbf{2.3494} \\
& (\(\boldsymbol{\downarrow} \mathbf{19.6\%}\)) & (\(\boldsymbol{\downarrow} \mathbf{10.5\%}\)) & (\(\boldsymbol{\downarrow} \mathbf{23.4\%}\)) & (\(\boldsymbol{\downarrow} \mathbf{15.6\%}\)) & (\(\boldsymbol{\downarrow} \mathbf{59.8\%}\)) & (\(\boldsymbol{\downarrow} \mathbf{19.6\%}\)) \\

\bottomrule
\end{tabular}
\label{tab:methods}
\end{table*}

\footnotetext[3]{Results for HOSE and GCN were reported in~\cite{jonas2019rapid}, FCG in~\cite{han2022SGNN}, and SGNN in~\cite{chen2024gt}.}
\footnotetext[4]{The original work predicts only hydrogens bonded to carbon, which is not directly comparable to the full evaluation.}

\subsection{Embedding Solvent Information}
\label{subsec:solv_embed}
Experimental NMR chemical shifts are inherently solvent-dependent. To account for solvent effects, we incorporate learnable solvent embeddings into the model.
Due to the highly imbalanced distribution, solvents are grouped into three categories: (1) \ce{CDCl3} (89.1\%), (2) \ce{DMSO-d6} (5.7\%), and (3) other infrequent solvents.
This grouping is driven by data scarcity: the first two categories cover the most common solvents, while the chemically diverse “others” use a single embedding as a practical approximation.

The solvent information is encoded as a learnable embedding $\mathbf{e}_{\text{solv}} \in \mathbb{R}^{d_{\text{model}}}$, which is incorporated into the model via four distinct integration strategies, as illustrated in Figure~\ref{fig:solvent}:

\textbf{(1) \texttt{[CLS]}-token injection.}
$\mathbf{e}_{\text{solv}}$ is added to the \texttt{[CLS]} token embedding to provide global solvent context.

\textbf{(2) Atom-token pre-backbone injection.}
$\mathbf{e}_{\text{solv}}$ is added to each atom embedding before the backbone.

\textbf{(3) Atom-token post-backbone injection.}
$\mathbf{e}_{\text{solv}}$ is added to each atom embedding after the backbone.

\textbf{(4) Simple correction.}
A learned scalar $b_{\text{solv}} \in \mathbb{R}$ is added uniformly to all predicted atomic shifts.

\section{Experiments}

\subsection{Implementation Details}

The ShiftDB-Lit dataset was partitioned into training and test sets with a 4:1 ratio, using a random split.
For NMRShiftDB2, we follow its pre-defined benchmark split~\cite{kuhn2015NMRShiftDB2} to ensure fair comparison with prior work. 
The models were trained using the same configuration and hyperparameters as those in the original NMRNet implementation, with detailed information provided in Table~\ref{tab:para}. Each method was evaluated based on Mean Absolute Error (MAE) and Root Mean Squared Error (RMSE) as performance metrics. All experiments were conducted on an NVIDIA RTX 4090 GPU.

\subsection{Overall Performance Comparison}

In this section, we compare the performance of a range of chemical-shift
prediction methods, including traditional approaches (HOSE~\cite{bremser1978hose}),
machine-learning models (GCN~\cite{jonas2019rapid},
FCG~\cite{kwon2020FCG}, SGNN~\cite{han2022SGNN}, GT-NMR~\cite{chen2024gt}),
and the current state-of-the-art model, NMRNet~\cite{xu2025toward}, evaluated
under both supervised and semi-supervised learning paradigms.

We evaluated all methods using MAE and RMSE under both atom-level and molecule-level objectives across the NMRShiftDB2 and ShiftDB-Lit datasets
(atomic-level metrics are unavailable for ShiftDB-Lit due to the absence of atom-wise assignments).
As shown in Table~\ref{tab:methods}, semi-supervised training with both labeled and unlabeled data significantly improves prediction accuracy over prior supervised models, reducing MAE by 13.4\% and 19.6\% for \( ^1 \)H and \( ^{13} \)C on the expert-annotated NMRShiftDB2 benchmark, effectively leveraging unlabeled data to overcome annotation scarcity. 

Notably, on the much larger ShiftDB-Lit dataset, the model achieves substantial improvements, with MAE reductions of 59.9\% and 59.8\% for \( ^1 \)H and \( ^{13} \)C, respectively. 
The NMRNet Baseline is trained only on NMRShiftDB2, making ShiftDB-Lit an out-of-distribution (OOD) test for the baseline, whereas the semi-supervised model leverages unlabeled ShiftDB-Lit data, making it an in-distribution (ID) test.
These results reflect both the benefits of semi-supervised learning and the broader chemical coverage from including ShiftDB-Lit, enhancing robustness and generalization, while purely supervised models remain limited by the coverage of labeled datasets and perform worse when extrapolating to molecules outside the training distribution.

\subsection{Incorporation of Solvent Information}
\paragraph{Solvent-injection Strategies.}

We evaluate the four solvent-injection strategies introduced in Section~\ref{subsec:solv_embed} against the solvent-free baseline. 
As shown in Figure~\ref{fig:solvent}, incorporating solvent information consistently improves prediction accuracy across both nuclei, with the \texttt{[CLS]}-token approach performing best, while simple correction yields only limited gains. 
These results suggest that solvent effects act as a global contextual bias rather than atom-local perturbations, 
making \texttt{[CLS]}-token conditioning better suited to capture solvent-dependent variations.

Comparing the two types of nuclei, the improvement for \( ^1\)H is more pronounced than for \( ^{13}\)C. 
This aligns with chemical intuition, as \( ^1\)H chemical shifts are more sensitive to solvent-dependent interactions such as hydrogen bonding and local polarity, whereas \( ^{13}\)C shifts are dominated by more localized electronic environments.

\begin{figure}[t]
    \centering
    \begin{subfigure}[c]{1\linewidth} 
        \centering
        \includegraphics[width=0.9\linewidth]{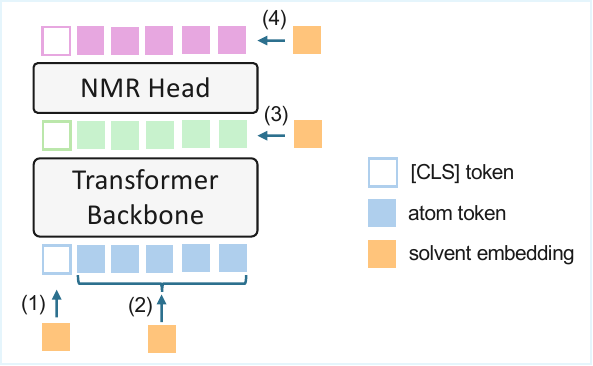}
        \vspace{-0.5em}
        \caption{}
    \end{subfigure}%

    \begin{subfigure}[c]{\linewidth} 
        \centering
        \includegraphics[width=\linewidth]{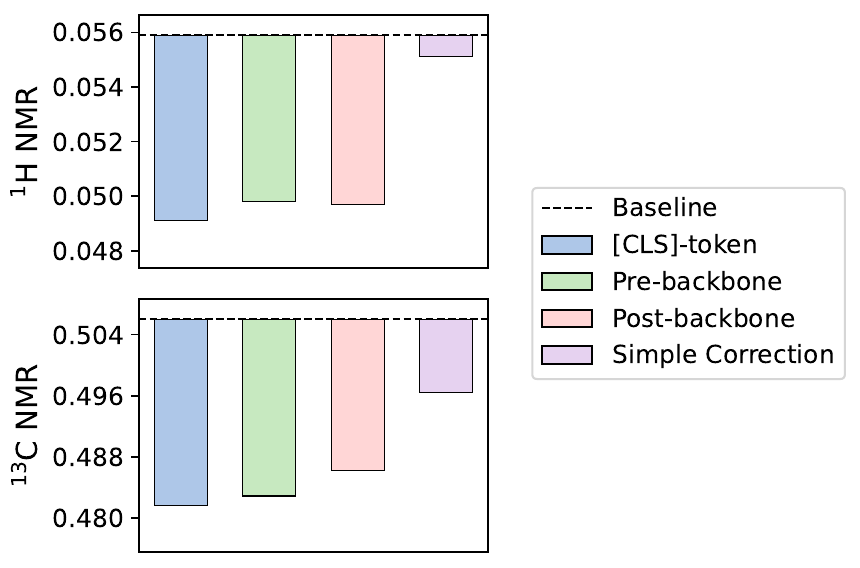}
        \caption{}
    \end{subfigure}
    
    \caption{\textbf{(a)} Strategies to incorporate solvent information into the model: 
    (1) \texttt{[CLS]}-token injection; 
    (2) atom-token pre-backbone injection; 
    (3) atom-token post-backbone injection; 
    (4) simple correction (applied after prediction). 
    \textbf{(b)} Comparison of these strategies on ShiftDB-Lit in terms of mean absolute error (MAE). 
    The baseline corresponds to the model without solvent information.}

    \label{fig:solvent}
\end{figure}

\paragraph{Per-solvent Performance.}

We further assess the impact of solvent conditioning across different solvent domains (\(\ce{CDCl3}\), \(\ce{DMSO-d6}\), and others).
For each solvent group, Table~\ref{tab:solvent-all} reports the MAE and RMSE achieved by solvent-conditioned models in comparison to a solvent-agnostic baseline.
In addition, per-atom prediction deviations are visualized using scatter plots in Appendix~\ref{app:scatter}.

The results indicate that solvent conditioning is particularly important for accurately predicting chemical shifts in less prevalent solvents, such as \ce{DMSO-d6}.
By explicitly incorporating solvent information, the model mitigates biases introduced by solvent-agnostic training and substantially improves performance on underrepresented solvent domains, especially under highly imbalanced solvent distributions.

We perform cross-solvent validation on molecules in the test set that appear under multiple solvent conditions. For each molecule, predictions are generated using the solvent-conditioned model with the correct solvent token, with incorrect solvent tokens, and without any solvent information. Using the correct solvent token consistently results in the lowest prediction error, while both mismatched solvent inputs and solvent-free predictions lead to higher errors. These results indicate that the model learns solvent-specific adjustments rather than a global chemical shift bias. Detailed results are reported in Appendix~\ref{app:val}.

\begin{table*}[ht]
\centering
\caption{Performance comparison across different solvents. 
Parentheses show relative improvement over the solvent-agnostic model.}
\small
\begin{tabular}{lcccccc}
\toprule
\multirow{2}{*}{\textbf{Solvent}} 
& \multirow{2}{*}{\textbf{Num. Molecules}} 
& \multicolumn{2}{c}{\textbf{With Incorporation}} 
& \multicolumn{2}{c}{\textbf{Without Incorporation}} \\
\cmidrule(r){3-4} \cmidrule(r){5-6}
& & \textbf{MAE} & \textbf{RMSE} & \textbf{MAE} & \textbf{RMSE} \\
\midrule
\textbf{\( ^1 \)H} & & & & & \\
\midrule
$\ce{CDCl3}$ & 162{,}509 
& \textbf{0.0475 (\(\downarrow\) 5.4\%)} & \textbf{0.1580 (\(\downarrow\) 5.3\%)} 
& 0.0502 & 0.1668 \\
$\ce{DMSO-d6}$ & 11{,}623 
& \textbf{0.0658 (\(\downarrow\)46.8\%)} & \textbf{0.2185 (\(\downarrow\)36.0\%)} 
& 0.1237 & 0.3415 \\
Others & 5{,}553 
& \textbf{0.0996 (\(\downarrow\) 9.5\%)} & \textbf{0.2198 (\(\downarrow\)25.2\%)} 
& 0.1100 & 0.2938 \\
All & 176{,}985 
& \textbf{0.0501 (\(\downarrow\)10.8\%)} & \textbf{0.1665 (\(\downarrow\)10.5\%)} 
& 0.0562 & 0.1861 \\
\midrule
\textbf{\( ^{13} \)C} & & & & & \\
\midrule
$\ce{CDCl3}$ & 126{,}364 
& \textbf{0.4775 (\(\downarrow\) 2.6\%)} & \textbf{2.2990 (\(\downarrow\) 0.3\%)} 
& 0.4903 & 2.3056 \\
$\ce{DMSO-d6}$ & 9{,}026 
& \textbf{0.6755 (\(\downarrow\)17.8\%)} & \textbf{2.5298 (\(\downarrow\) 2.0\%)} 
& 0.8223 & 2.5818 \\
Others & 5{,}485 
& \textbf{0.8684 (\(\downarrow\) 9.0\%)} & \textbf{3.0113 (\(\downarrow\) 1.4\%)} 
& 0.9547 & 3.0530 \\
All & 140{,}875 
& \textbf{0.5042 (\(\downarrow\) 4.5\%)} & \textbf{2.3440 (\(\downarrow\) 0.2\%)} 
& 0.5281 & 2.3494 \\
\bottomrule
\end{tabular}
\label{tab:solvent-all}
\end{table*}

\subsection{Heteroatom Chemical Shift Prediction}

ShiftDB-Lit provides a rare experimentally labeled dataset for heteroatoms ($^{19}$F, $^{31}$P, $^{11}$B, $^{29}$Si), addressing long-standing data scarcity and enabling standardized benchmarking.
Leveraging this dataset, we perform large-scale supervised training and achieve strong predictive accuracy across all heteroatoms (Table~\ref{tab:hete}).
These results establish a solid baseline for heteroatom chemical shift prediction and support future methodological advances.

\begin{table}[H]
\centering
\caption{Evaluation metrics for heteroatom chemical shift prediction on ShiftDB-Lit.}
\label{tab:hete}

\begin{tabular}{lccc}
\toprule
\textbf{Heteroatom} & \textbf{MAE (ppm)} & \textbf{RMSE (ppm)} & \textbf{R²} \\
\midrule
$^{19}$F & 2.2809 & 8.7596 & 0.7216 \\
$^{31}$P & 1.3099 & 4.6877 & 0.9634 \\
$^{11}$B & 0.8287 & 2.8560 & 0.9406 \\
$^{29}$Si & 1.9186 & 5.2337 & 0.8901 \\
\bottomrule
\end{tabular}
\end{table}

\begin{table*}[ht]
\centering
\caption{Ablation study on the effect of the training dataset. ``DB2'' refers to the NMRShiftDB2 dataset, and ``DB-Lit'' refers to the ShiftDB-Lit dataset. A dash (``--'') indicates that the corresponding dataset or loss is not used.}
\small
\begin{tabular}{c c c c c c c c c}
\toprule
\multirow{2}{*}{\textbf{No.}} & \multicolumn{2}{c}{\textbf{Training Datasets}} & \multicolumn{2}{c}{\textbf{NMRShiftDB2 ($L_{\text{atom}}$)}} & \multicolumn{2}{c}{\textbf{NMRShiftDB2 ($L_{\text{mol}}$)}} & \multicolumn{2}{c}{\textbf{ShiftDB-Lit ($L_{\text{mol}}$)}} \\
\cmidrule(r){2-3} \cmidrule(r){4-5} \cmidrule(r){6-7} \cmidrule(r){8-9}
& \textbf{supervised} & \textbf{weakly-supervised} & \textbf{MAE} & \textbf{RMSE} & \textbf{MAE} & \textbf{RMSE} & \textbf{MAE} & \textbf{RMSE} \\
\midrule
\textbf{\( ^1 \)H} \\
\midrule
1 & DB2 & -- & 0.1972 & 0.4564 & 0.1761 & 0.3896 & 0.1395 & 0.2790 \\
2 & -- & DB-Lit & 0.2412 & 0.5600 & 0.2103 & 0.4533 & 0.0543 & 0.1849 \\
3 & -- & DB2 & 0.2308 & 0.4902 & 0.1963 & 0.4051 & 0.1439 & 0.2835 \\
4 & DB2 & DB2 & 0.2152 & 0.4844 & 0.1829 & 0.3968 & 0.1413 & 0.2840 \\
5 & DB2 & DB-Lit & \textbf{0.1709} & \textbf{0.4337} & \textbf{0.1492} & \textbf{0.3620} & \textbf{0.0559} & \textbf{0.1846} \\
\midrule
\textbf{\( ^{13} \)C} \\
\midrule
1 & DB2 & -- & 1.1518 & 2.1398 & 1.0143 & 1.8513 & 1.2591 & 2.9207 \\
2 & -- & DB-Lit & 1.5214 & 4.6658 & 1.2931 & 3.1443 & 0.9965 & 2.5962 \\
3 & -- & DB2 & 2.3848 & 4.0615 & 2.1943 & 3.7542 & 2.0393 & 3.9687 \\
4 & DB2 & DB2 & 1.1503 & 2.2251 & 0.9753 & 1.8541 & 1.1730 & 2.9139 \\
5 & DB2 & DB-Lit & \textbf{0.9270} & \textbf{1.9128} & \textbf{0.7765} & \textbf{1.5629} & \textbf{0.5060} & \textbf{2.3494} \\
\bottomrule
\end{tabular}
\label{tab:ablation-dataset}
\end{table*}

\subsection{Ablation Study}

\paragraph{Supervised and Weakly-supervised Datasets.}

Table~\ref{tab:ablation-dataset} summarizes the effects of different combinations of supervised (NMRShiftDB2 with $L_{\text{atom}}$) and weakly-supervised (ShiftDB-Lit or NMRShiftDB2 with $L_{\text{mol}}$) training. Three key observations emerge.

First, augmenting supervised training with an additional weakly-supervised loss does not improve performance over purely supervised learning on both $L_{\text{atom}}$ and $L_{\text{mol}}$ metric (Exp.~1 vs 4). Moreover, training with weak supervision alone leads to a clear degradation in accuracy (Exp.~3), indicating that, when trained on the same fully labeled dataset, augmenting atom-level supervision with additional molecular-level weak supervision does not provide complementary information beyond standard supervised learning.

Second, combining supervised training on the high-quality NMRShiftDB2 labels with weakly-supervised learning on the literature-scale ShiftDB-Lit dataset yields consistent improvements for both \(^{1}\)H and \(^{13}\)C predictions (Exp.~5). These gains persist on the NMRShiftDB2 benchmark despite the substantial distribution shift between the two datasets, suggesting that performance is primarily limited by data scarcity rather than model capacity. In this regime, weak molecular-level supervision signals distilled from millions of literature spectra act as an effective regularizer.

Third, training exclusively on the weakly-supervised ShiftDB-Lit dataset leads to model collapse (Exp.~2), revealing an inherent failure mode of permutation-based weak supervision. Without atom-level anchoring, early prediction errors are amplified through incorrect bipartite matching, resulting in erroneous atom–peak associations. This effect manifests as substantially worse $L_{\text{atom}}$ metrics compared to $L_{\text{mol}}$, and also hampers training convergence. For example, in \(^{13}\)C prediction, models trained solely on ShiftDB-Lit underperform the combined NMRShiftDB2 and ShiftDB-Lit setting even when evaluated under $L_{\text{mol}}$.

Overall, these results indicate that weak supervision alone is inadequate, but becomes highly effective when anchored by a moderate amount of high-quality labeled data. The combination of NMRShiftDB2 and ShiftDB-Lit therefore provides the most effective configuration for training robust and accurate NMR chemical shift predictors.

\paragraph{Weight $\lambda$.}

The hyperparameter \( \lambda \) controls the trade-off between atom-level supervised learning and molecular-level weak supervision in the training objective. In practice, its effective range reflects a balance between leveraging additional weakly-supervised signals and maintaining training stability under noisy or weakly-identifiable supervision.

Figure~\ref{fig:weight} illustrates the model performance under different values of \( \lambda \). On the NMRShiftDB2 benchmark evaluated with \( L_{\text{atom}} \), we observe a clear U-shaped trend for both $^1$H and $^{13}$C NMR, indicating that both underweighting and overweighting the weakly-supervised component can be detrimental. When \( \lambda \) is too small, the model fails to effectively exploit the complementary information provided by weak supervision. Conversely, overly large values of \( \lambda \) cause the training objective to be dominated by molecular-level constraints, biasing the optimization toward degenerate solutions that disregard atom-level correctness.

This effect is particularly evident for \( ^1\mathrm{H} \) predictions under the ShiftDB-Lit (\( L_{\text{mol}} \)) metric, which generally decreases as \( \lambda \) increases. In the absence of sufficient atom-level anchoring, excessive emphasis on weak supervision encourages solutions that optimize molecular-level matching while sacrificing correct atom–peak correspondences. As a result, the molecular-level objective can be increasingly satisfied during training, whereas atom-level accuracy deteriorates, indicating model collapse.

\begin{figure}[H]
    \centering
    \begin{subfigure}{0.87\linewidth}
        \includegraphics[width=\linewidth]{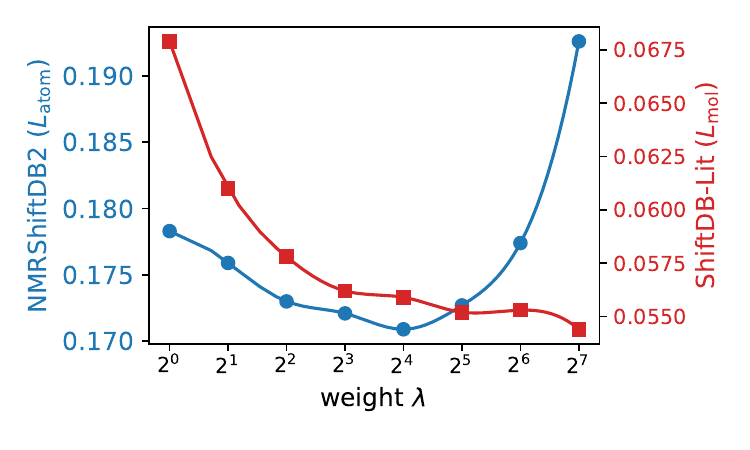}
        \vspace{-2em}
        \caption{\( ^{1} \)H NMR}
    \end{subfigure}
    \begin{subfigure}{0.87\linewidth}
        \includegraphics[width=\linewidth]{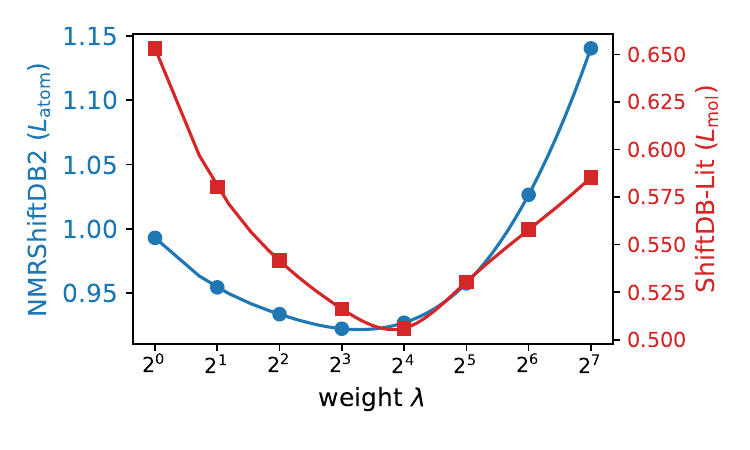}
        \vspace{-2em}
        \caption{\( ^{13} \)C NMR}
    \end{subfigure}
    \caption{Model performance under different values of the weakly-supervised weight \( \lambda \) (MAE metric).}
    \label{fig:weight}
\end{figure}

\section{Conclusion}

In this work, we address the long-standing data bottleneck in NMR chemical shift prediction by framing learning from literature-derived spectra as a semi-supervised, permutation-invariant problem over unordered supervision signals. By combining a small set of atom-assigned labels with millions of unassigned spectra, we demonstrate that meaningful supervision can be effectively extracted from unordered experimental observations. This approach offers a scalable solution to chemical shift modeling beyond the constraints of fully labeled datasets.

Our approach leverages a deterministic sorting-based loss for unassigned shifts, enabling stable and scalable training while avoiding the combinatorial complexity of assignment-based objectives. Empirical results demonstrate substantial improvements in prediction accuracy and generalization across diverse molecular structures, solvents, and nuclei. In particular, solvent effects can be captured at scale through simple global conditioning, highlighting the ability of our model to incorporate context-dependent chemical information.

More broadly, this study demonstrates a paradigm shift in scientific machine learning—moving beyond the reliance on small, curated labeled datasets to leverage large-scale, literature-extracted data. Harnessing such vast, unlabeled resources presents a promising path to overcoming the data bottleneck, significantly enhancing model performance in fields where high-quality annotations are scarce. 
We hope this work will inspire the development of principled methods for learning from weakly structured scientific data, as well as systematic pipelines for extracting and organizing scientific data from the vast literature.


\bibliography{refs}

@misc{landrum2016rdkit,
  title={{RDKit}: Open-source cheminformatics software},
  author={Landrum, Greg and others},
  year={2016}
}

@article{kuhn1955hungarian,
  title={The {Hungarian} method for the assignment problem},
  author={Kuhn, Harold W},
  journal={Naval research logistics quarterly},
  volume={2},
  number={1-2},
  pages={83--97},
  year={1955},
  publisher={Wiley Online Library}
}

@article{Munkres1957hungarian,
author = {Munkres, James},
title = {Algorithms for the Assignment and Transportation Problems},
journal = {Journal of the Society for Industrial and Applied Mathematics},
volume = {5},
number = {1},
pages = {32-38},
year = {1957},
doi = {10.1137/0105003},
URL = {https://doi.org/10.1137/0105003},
eprint = {https://doi.org/10.1137/0105003}
}

@article{wolinski1990giao,
  title={Efficient implementation of the gauge-independent atomic orbital method for {NMR} chemical shift calculations},
  author={Wolinski, Krzysztof and Hinton, James F and Pulay, Peter},
  journal={Journal of the American Chemical Society},
  volume={112},
  number={23},
  pages={8251--8260},
  year={1990},
  publisher={ACS Publications}
}

@article{xu2025toward,
  title={Toward a unified benchmark and framework for deep learning-based prediction of nuclear magnetic resonance chemical shifts},
  author={Xu, Fanjie and Guo, Wentao and Wang, Feng and Yao, Lin and Wang, Hongshuai and Tang, Fujie and Gao, Zhifeng and Zhang, Linfeng and E, Weinan and Tian, Zhong-Qun and others},
  journal={Nature Computational Science},
  pages={1--9},
  year={2025},
  publisher={Nature Publishing Group US New York}
}

@book{clayden2012organic,
  title={Organic chemistry},
  author={Clayden, Jonathan and Greeves, Nick and Warren, Stuart},
  year={2012},
  publisher={Oxford university press}
}

@article{skoog2019textbook,
  title={Textbook “principles of instrumental analysis”},
  author={Skoog, Doglas A and Holler, F James and Crouch, Stanley R},
  journal={Cengage learning},
  volume={6},
  year={2019}
}

@article{jonas2019rapid,
  title={Rapid prediction of {NMR} spectral properties with quantified uncertainty},
  author={Jonas, Eric and Kuhn, Stefan},
  journal={Journal of cheminformatics},
  volume={11},
  number={1},
  pages={50},
  year={2019},
  publisher={Springer}
}

@article{kwon2020FCG,
  title={Neural message passing for NMR chemical shift prediction},
  author={Kwon, Youngchun and Lee, Dongseon and Choi, Youn-Suk and Kang, Myeonginn and Kang, Seokho},
  journal={Journal of chemical information and modeling},
  volume={60},
  number={4},
  pages={2024--2030},
  year={2020},
  publisher={ACS Publications}
}

@article{han2022SGNN,
  title={Scalable graph neural network for {NMR} chemical shift prediction},
  author={Han, Jongmin and Kang, Hyungu and Kang, Seokho and Kwon, Youngchun and Lee, Dongseon and Choi, Youn-Suk},
  journal={Physical Chemistry Chemical Physics},
  volume={24},
  number={43},
  pages={26870--26878},
  year={2022},
  publisher={Royal Society of Chemistry}
}

@article{zou2023DetaNet,
  title={A deep learning model for predicting selected organic molecular spectra},
  author={Zou, Zihan and Zhang, Yujin and Liang, Lijun and Wei, Mingzhi and Leng, Jiancai and Jiang, Jun and Luo, Yi and Hu, Wei},
  journal={Nature Computational Science},
  volume={3},
  number={11},
  pages={957--964},
  year={2023},
  publisher={Nature Publishing Group US New York}
}

@article{chen2024gt,
  title={GT-NMR: a novel graph transformer-based approach for accurate prediction of NMR chemical shifts},
  author={Chen, Haochen and Liang, Tao and Tan, Kai and Wu, Anan and Lu, Xin},
  journal={Journal of Cheminformatics},
  volume={16},
  number={1},
  pages={132},
  year={2024},
  publisher={Springer}
}

@article{kuhn2012chemical,
  title={From chemical shift data through prediction to assignment and NMR LIMS-multiple functionalities of nmrshiftdb2},
  author={Kuhn, Stefan and Schl{\"o}rer, Nils E and Kolshorn, Heinz and Stoll, Raphael},
  journal={Journal of Cheminformatics},
  volume={4},
  number={Suppl 1},
  pages={P52},
  year={2012},
  publisher={Springer}
}

@article{kuhn2015nmrshiftdb2,
  title={Facilitating quality control for spectra assignments of small organic molecules: nmrshiftdb2--a free in-house {NMR} database with integrated {LIMS} for academic service laboratories},
  author={Kuhn, Stefan and Schl{\"o}rer, Nils E},
  journal={Magnetic Resonance in Chemistry},
  volume={53},
  number={8},
  pages={582--589},
  year={2015},
  publisher={Wiley Online Library}
}

@article{gupta2021qm9nmr,
  title={Revving up {13C} {NMR} shielding predictions across chemical space: benchmarks for atoms-in-molecules kernel machine learning with new data for 134 kilo molecules},
  author={Gupta, Amit and Chakraborty, Sabyasachi and Ramakrishnan, Raghunathan},
  journal={Machine Learning: Science and Technology},
  volume={2},
  number={3},
  pages={035010},
  year={2021},
  publisher={IOP Publishing}
}

@article{bremser1978hose,
  title={Hose—a novel substructure code},
  author={Bremser, W},
  journal={Analytica Chimica Acta},
  volume={103},
  number={4},
  pages={355--365},
  year={1978},
  publisher={Elsevier}
}

@article{halgren1996merck,
  title={Merck molecular force field. I. Basis, form, scope, parameterization, and performance of {MMFF94}},
  author={Halgren, Thomas A},
  journal={Journal of computational chemistry},
  volume={17},
  number={5-6},
  pages={490--519},
  year={1996},
  publisher={Wiley Online Library}
}

@article{vaswani2017attention,
  title={Attention is all you need},
  author={Vaswani, Ashish and Shazeer, Noam and Parmar, Niki and Uszkoreit, Jakob and Jones, Llion and Gomez, Aidan N and Kaiser, {\L}ukasz and Polosukhin, Illia},
  journal={Advances in neural information processing systems},
  volume={30},
  year={2017}
}

@article{wishart199412,
  title={[12] Chemical shifts as a tool for structure determination},
  author={Wishart, David S and Sykes, Brian D},
  journal={Methods in enzymology},
  volume={239},
  pages={363--392},
  year={1994},
  publisher={Elsevier}
}

@article{tsai2022ml,
  title={ML-J-DP4: An integrated quantum mechanics-machine learning approach for ultrafast NMR structural elucidation},
  author={Tsai, Yi-Hsuan and Amichetti, Milagros and Zanardi, Mar{\'\i}a Marta and Grimson, Rafael and Daranas, Antonio Hernandez and Sarotti, Ariel M},
  journal={Organic letters},
  volume={24},
  number={41},
  pages={7487--7491},
  year={2022},
  publisher={ACS Publications}
}

@article{jin2025nmr,
  title={{NMR-Solver}: Automated Structure Elucidation via Large-Scale Spectral Matching and Physics-Guided Fragment Optimization},
  author={Jin, Yongqi and Wang, Jun-Jie and Xu, Fanjie and Ji, Xiaohong and Gao, Zhifeng and Zhang, Linfeng and Ke, Guolin and Zhu, Rong and others},
  journal={arXiv preprint arXiv:2509.00640},
  year={2025}
}

@article{smith2010assigning,
  title={Assigning stereochemistry to single diastereoisomers by GIAO NMR calculation: The DP4 probability},
  author={Smith, Steven G and Goodman, Jonathan M},
  journal={Journal of the American Chemical Society},
  volume={132},
  number={37},
  pages={12946--12959},
  year={2010},
  publisher={ACS Publications}
}

@article{chen2020review,
  title={Review and prospect: deep learning in nuclear magnetic resonance spectroscopy},
  author={Chen, Dicheng and Wang, Zi and Guo, Di and Orekhov, Vladislav and Qu, Xiaobo},
  journal={Chemistry--A European Journal},
  volume={26},
  number={46},
  pages={10391--10401},
  year={2020},
  publisher={Wiley Online Library}
}

@article{hu2023machine,
  title={Machine learning-assisted structure annotation of natural products based on MS and NMR data},
  author={Hu, Guilin and Qiu, Minghua},
  journal={Natural Product Reports},
  volume={40},
  number={11},
  pages={1735--1753},
  year={2023},
  publisher={Royal Society of Chemistry}
}

@article{buckingham1960solvent,
  title={Solvent effects in nuclear magnetic resonance spectra},
  author={Buckingham, AD and Schaefer, T and Schneider, WG},
  journal={The Journal of Chemical Physics},
  volume={32},
  number={4},
  pages={1227--1233},
  year={1960},
  publisher={American Institute of Physics}
}

@article{wang2025nmrextractor,
  title={NMRExtractor: leveraging large language models to construct an experimental NMR database from open-source scientific publications},
  author={Wang, Qinggong and Zhang, Wei and Chen, Mingan and Li, Xutong and Xiong, Zhaoping and Xiong, Jiacheng and Fu, Zunyun and Zheng, Mingyue},
  journal={Chemical Science},
  volume={16},
  number={25},
  pages={11548--11558},
  year={2025},
  publisher={Royal Society of Chemistry}
}

@article{fang2024molparser,
  title={Molparser: End-to-end visual recognition of molecule structures in the wild},
  author={Fang, Xi and Wang, Jiankun and Cai, Xiaochen and Chen, Shangqian and Yang, Shuwen and Tao, Haoyi and Wang, Nan and Yao, Lin and Zhang, Linfeng and Ke, Guolin},
  journal={arXiv preprint arXiv:2411.11098},
  year={2024}
}

@article{wang2024mineru,
  title={Mineru: An open-source solution for precise document content extraction},
  author={Wang, Bin and Xu, Chao and Zhao, Xiaomeng and Ouyang, Linke and Wu, Fan and Zhao, Zhiyuan and Xu, Rui and Liu, Kaiwen and Qu, Yuan and Shang, Fukai and others},
  journal={arXiv preprint arXiv:2409.18839},
  year={2024}
}

@article{zhou2023uni,
  title={Uni-Mol: A Universal 3D Molecular Representation Learning Framework},
  author={Zhou, Gengmo and Gao, Zhifeng and Ding, Qiankun and Zheng, Hang and Xu, Hongteng and Wei, Zhewei and Zhang, Linfeng and Ke, Guolin},
  journal={ChemRxiv},
  year={2023},
  doi={10.26434/chemrxiv-2022-jjm0j-v4},
}

@article{chen2021chemical,
  title={Chemical toxicity prediction based on semi-supervised learning and graph convolutional neural network},
  author={Chen, Jiarui and Si, Yain-Whar and Un, Chon-Wai and Siu, Shirley WI},
  journal={Journal of cheminformatics},
  volume={13},
  number={1},
  pages={93},
  year={2021},
  publisher={Springer}
}

@article{kwon2022harnessing,
  title={Harnessing semi-supervised machine learning to automatically predict bioactivities of per-and polyfluoroalkyl substances (PFASs)},
  author={Kwon, Hyuna and Ali, Zulfikhar A and Wong, Bryan M},
  journal={Environmental science \& technology letters},
  volume={10},
  number={11},
  pages={1017--1022},
  year={2022},
  publisher={ACS Publications}
}

@article{huo2019semi,
  title={Semi-supervised machine-learning classification of materials synthesis procedures},
  author={Huo, Haoyan and Rong, Ziqin and Kononova, Olga and Sun, Wenhao and Botari, Tiago and He, Tanjin and Tshitoyan, Vahe and Ceder, Gerbrand},
  journal={Npj Computational Materials},
  volume={5},
  number={1},
  pages={62},
  year={2019},
  publisher={Nature Publishing Group UK London}
}

@article{shi2023weakly,
  title={Weakly supervised identification and localization of drug fingerprints based on label-free hyperspectral CARS microscopy},
  author={Shi, Jindou and Bera, Kajari and Mukherjee, Prabuddha and Alex, Aneesh and Chaney, Eric J and Spencer-Dene, Bradley and Majer, Jan and Marjanovic, Marina and Spillman Jr, Darold R and Hood, Steve R and others},
  journal={Analytical Chemistry},
  volume={95},
  number={29},
  pages={10957--10965},
  year={2023},
  publisher={ACS Publications}
}

@article{van2020survey,
  title={A survey on semi-supervised learning},
  author={Van Engelen, Jesper E and Hoos, Holger H},
  journal={Machine learning},
  volume={109},
  number={2},
  pages={373--440},
  year={2020},
  publisher={Springer}
}

@article{zhou2018brief,
  title={A brief introduction to weakly supervised learning},
  author={Zhou, Zhi-Hua},
  journal={National science review},
  volume={5},
  number={1},
  pages={44--53},
  year={2018},
  publisher={Oxford University Press}
}

@article{mark2001structure,
  title={Structure and dynamics of the TIP3P, SPC, and SPC/E water models at 298 K},
  author={Mark, Pekka and Nilsson, Lennart},
  journal={The Journal of Physical Chemistry A},
  volume={105},
  number={43},
  pages={9954--9960},
  year={2001},
  publisher={ACS Publications}
}

@article{mennucci2012polarizable,
  title={Polarizable continuum model},
  author={Mennucci, Benedetta},
  journal={Wiley Interdisciplinary Reviews: Computational Molecular Science},
  volume={2},
  number={3},
  pages={386--404},
  year={2012},
  publisher={Wiley Online Library}
}

@article{klamt2011cosmo,
  title={The COSMO and COSMO-RS solvation models},
  author={Klamt, Andreas},
  journal={Wiley Interdisciplinary Reviews: Computational Molecular Science},
  volume={1},
  number={5},
  pages={699--709},
  year={2011},
  publisher={Wiley Online Library}
}

@article{li2025transpeaknet,
  title={TransPeakNet for solvent-aware 2D NMR prediction via multi-task pre-training and unsupervised learning},
  author={Li, Yunrui and Xu, Hao and Kumar, Ambrish and Wang, Duo-Sheng and Heiss, Christian and Azadi, Parastoo and Hong, Pengyu},
  journal={Communications chemistry},
  volume={8},
  number={1},
  pages={51},
  year={2025},
  publisher={Nature Publishing Group UK London}
}

@article{wang2025nmrexp,
  title={NMRexp: A database of 3.3 million experimental NMR spectra},
  author={Wang, Jun-Jie and Jin, Yongqi and Zhi, Chen-Yu and Liu, Yu-Jie and Huang, Xu-Hao and Xu, Fanjie and Ji, Xiaohong and Fang, Xi and Tao, Haoyi and E, Weinan and others},
  journal={Scientific Data},
  volume={12},
  number={1},
  pages={1954},
  year={2025},
  publisher={Nature Publishing Group UK London}
}
\bibliographystyle{icml2026}

\newpage
\appendix
\onecolumn

\section{Data Processing} \label{app:data}

\subsection*{Molecular validity checks}
Due to the presence of many uncommon molecular systems in the literature data, which may affect the stable training and accurate evaluation of the model, we performed a molecular validity check to ensure the quality of the dataset. Structures that met any of the following criteria were filtered out:
\begin{itemize}
    \item Containing free radicals or isotopes
    \item Having illegal SMILES (i.e., those that cannot be correctly parsed)
    \item Uncommon elemental compositions (For C and H spectra, only common elements such as C, H, O, N, S, P, F, and Cl were retained, 
    consistent with the NMRShiftDB2 dataset)
\end{itemize}

\subsection*{NMR data validity checks}
Since the literature data may contain reporting errors or typos (e.g., missing symbols or misplaced decimal points), we applied several NMR data validity checks to ensure consistency and quality. These checks included:
\begin{itemize}
    \item Ensuring the monotonicity of chemical shifts
    \item Verifying that the chemical shift range was within expected limits
    \item Filtering out peaks with excessively broad widths, which cannot provide accurate chemical shift labels
\end{itemize}

\subsection*{Consistency check}

To ensure consistency between molecular structures and the corresponding NMR data, we performed a consistency check by comparing the number of atoms in the molecule with the number of chemical shifts observed in the spectrum. For carbon spectra, peak integration information is typically unavailable, making it difficult to directly determine the number of atoms contributing to each resonance. We therefore enforced a consistency criterion requiring that the number of distinct chemical shifts in the $^{13}$C spectrum match the number of symmetry-unique carbon atoms in the corresponding molecular structure, accounting for possible peak overlap due to molecular symmetry.

\subsection*{Heteroatom Data Processing}

For heteroatoms, we followed a similar processing workflow to obtain a high-quality labeled dataset:

\begin{itemize}
    \item \textbf{Molecular validity checks:} Similar to the procedures for hydrogen (H) and carbon (C), with the exception that no specific element type constraints are applied.
    \item \textbf{NMR data validity checks:} Chemical shifts must fall within the valid range.
    \item \textbf{Consistency check:} Each molecule contains only one equivalent heteroatom, and the spectrum includes a single chemical shift corresponding to that heteroatom.
\end{itemize}

\begin{table}[ht]
\caption{Valid chemical shift ranges for different elements.}
\centering
\begin{tabular}{lcc}
\toprule
\textbf{Element} & \textbf{Lower Bound (ppm)} & \textbf{Upper Bound (ppm)} \\
\midrule
$^{1}$H & -1 & 15 \\
$^{13}$C & -10 & 230 \\
$^{19}$F & -300 & 300 \\
$^{31}$P & -150 & 200 \\
$^{11}$B & -50 & 100 \\
$^{29}$Si & -70 & 40 \\
\bottomrule
\end{tabular}
\end{table}

\subsection*{3D conformation generation}
We generated molecular conformations using the \texttt{EmbedMolecule} and \texttt{MMFFOptimizeMolecule} functions from the RDKit toolkit~\cite{landrum2016rdkit}, which perform conformational sampling and geometry optimization under the Merck Molecular Force Field (MMFF)~\cite{halgren1996merck}.

\section{Dataset Statistics}
\label{app:dataset}

Figure~\ref{fig:dataset} summarizes the distributions of the number of atoms and chemical shifts per entry in the NMRShiftDB2 and ShiftDB-Lit datasets. Compared to NMRShiftDB2, ShiftDB-Lit not only contains a substantially larger number of entries but also provides a broader coverage of molecules and more complete chemical shift information, highlighting its advantage in scale and diversity.

\begin{figure}[H]
    \centering
    \includegraphics[width=0.6\linewidth]{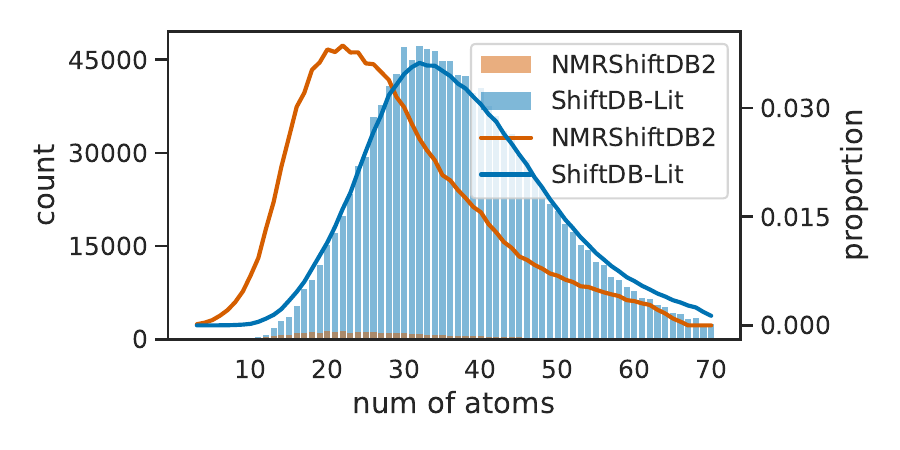}
    \includegraphics[width=0.6\linewidth]{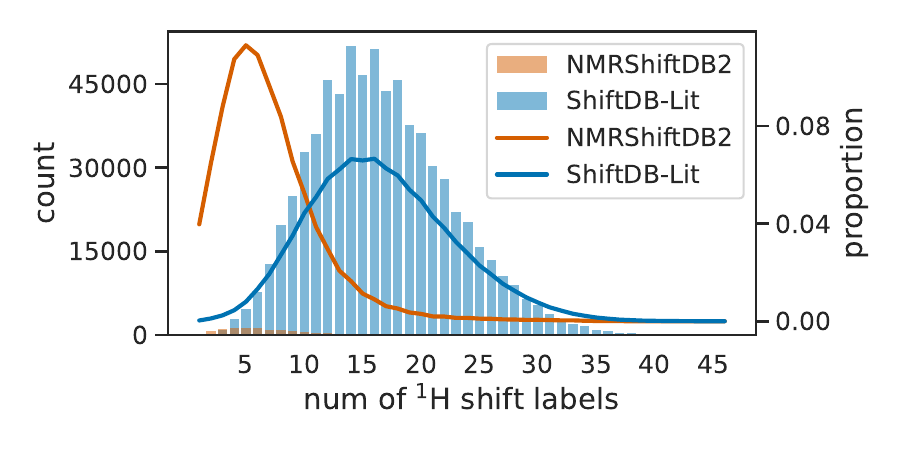}
    \includegraphics[width=0.6\linewidth]{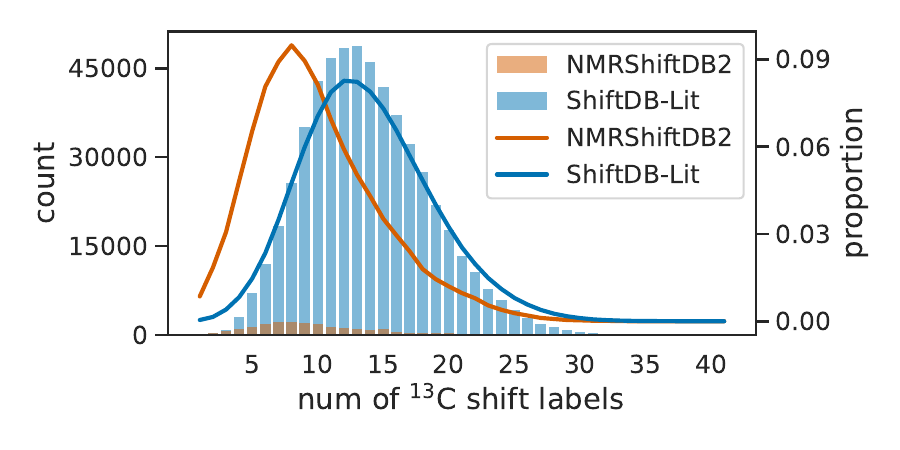}

    \caption{Statistics of NMRShiftDB2 and ShiftDB-Lit datasets.}
    \label{fig:dataset}
\end{figure}

\section{Pretraining and Fine-Tuning Strategy}
\label{app:train}

We adopt a consistent training strategy with the baseline model that combines molecular pretraining with downstream fine-tuning for chemical shift prediction. NMRNet~\cite{xu2025toward} leverages pre-trained weights from Uni-Mol~\cite{zhou2023uni}, obtained via self-supervised learning on a large-scale molecular dataset. During fine-tuning, the model is further trained on the chemical shift dataset to adapt its representations for accurate NMR chemical shift prediction.

\section{Model Architecture and Training Hyperparameter Settings}
\label{app:setting}

To ensure a fair comparison and highlight the effectiveness of the semi-supervised approach, we adopt the same model architecture and optimization parameters as those used in the NMRNet model. Unless otherwise specified, this set of parameters is used consistently across all experiments.
For both the supervised and semi-supervised settings, training hyperparameters are independently optimized based on validation performance.

\begin{table}[ht]
\centering
\caption{Hyperparameter settings for model architecture and training configuration.}
\label{tab:para}

\begin{tabular}{lcc}
\toprule
Hyperparameter & Supervised & Semi-supervised \\
\midrule
\textit{Model Architecture} & & \\
\midrule
Layers & \multicolumn{2}{c}{15} \\
Attention Heads & \multicolumn{2}{c}{64} \\
Embedding Dim & \multicolumn{2}{c}{512} \\
FFN Hidden Dim & \multicolumn{2}{c}{2048} \\
Activation Function & \multicolumn{2}{c}{GELU} \\
FFN / Attention Dropout & \multicolumn{2}{c}{0.1} \\
\midrule
\textit{Optimizer Configuration} & & \\
\midrule
Optimizer & \multicolumn{2}{c}{Adam} \\
Warmup Ratio & \multicolumn{2}{c}{0.03} \\
Weight Decay & \multicolumn{2}{c}{$1 \times 10^{-4}$} \\
Learning Rate Decay & \multicolumn{2}{c}{Linear} \\
Adam's $\epsilon$ & \multicolumn{2}{c}{$1 \times 10^{-6}$} \\
Adam's $(\beta_1, \beta_2)$ & \multicolumn{2}{c}{(0.9, 0.99)} \\
Gradient Clip Norm & \multicolumn{2}{c}{1.0} \\
\midrule
\textit{Training Hyperparameters (Optimized)} & & \\
\midrule
Peak Learning Rate & 1e-4 & 4e-4 \\
Batch Size (Labeled Dataset) & 8 & 4 \\
Batch Size (Unlabeled Dataset) & 0 & 16 \\
Epochs & 50 & 10 \\
Weight $\lambda$ & -- & 16 \\
\bottomrule
\end{tabular}
\end{table}

\section{Expanded Results}

\subsection{Different Solvent-injection Strategies}

\begin{table}[H]
    \centering
    \caption{Evaluations for different mechanisms of solvent incorporation.}
    \small
    \begin{tabular}{lcccccc}
    \toprule
    \multirow{2}{*}{\textbf{Mechanisms}}  & \multicolumn{2}{c}{\textbf{\( ^1 \)H}} &\multicolumn{2}{c}{\textbf{\( ^{13} \)C}} \\
    \cmidrule(r){2-3} \cmidrule(r){4-5}
     & \textbf{MAE} & \textbf{RMSE} & \textbf{MAE} & \textbf{RMSE} \\
    \midrule
    Without Injection& 0.0559 & 0.1846 & 0.5060 & 2.3494 \\
    CLS-token Injection & 0.0491 & 0.1660 & 0.4817 & 2.3373 \\
    Pre-backbone Injection &  0.0498  & 0.1664 & 0.4829 & 2.3394 \\
    Post-backbone Injection & 0.0497   & 0.1663 & 0.4862 & 2.3377 \\
    Simple Correction & 0.0551 & 0.1843 & 0.4964 & 2.3448 \\
    \bottomrule
    \end{tabular}
    \label{tab:solv-inject}
\end{table}

\subsection{Solvent Pairs Validation}
\label{app:val}

\begin{table}[H]
\centering
\caption{Cross validation for molecules in the test set that appear under different solvent conditions.}
\small
\begin{tabular}{lcccccccccc}
\toprule
\textbf{Solvent Pairs} & \multirow{2}{*}{\textbf{Num.}} & \multicolumn{2}{c}{\textbf{Correct Incorporation}} &\multicolumn{2}{c}{\textbf{Incorrect Incorporation}} & \multicolumn{2}{c}{\textbf{No Incorporation}}  \\
\cmidrule(r){3-4} \cmidrule(r){5-6} \cmidrule{7-8}
\textbf{(Correct / Incorrect)} & & \textbf{MAE} & \textbf{RMSE} & \textbf{MAE} & \textbf{RMSE}& \textbf{MAE} & \textbf{RMSE} \\
\midrule
\textbf{\( ^1 \)H} & & & & & & & \\
\midrule
$\ce{CDCl3}$ / $\ce{DMSO-d6}$ & 419 & 0.0832 & 0.4023 & 0.2814 & 0.6976 & 0.1228 & 0.4813  \\
$\ce{DMSO-d6}$ / $\ce{CDCl3}$ & 419 & 0.0693 & 0.3262 & 0.2815 & 0.6855 & 0.2174 & 0.5919  \\
$\ce{CDCl3}$ / Others & 389 & 0.0502 & 0.2368 & 0.0689 & 0.2724 & 0.0567 & 0.2616  \\
Others / $\ce{CDCl3}$ & 389 & 0.0871 & 0.2686 & 0.0926 & 0.2856 & 0.0921 & 0.2850\\
$\ce{DMSO-d6}$ / Others & 50 & 0.0326 & 0.0675 & 0.2026 & 0.4761 & 0.1775 & 0.4704 \\
Others / $\ce{DMSO-d6}$ & 50 & 0.1160 & 0.2925 & 0.2463 & 0.5439 & 0.1727 & 0.4425 \\

\midrule
\textbf{\( ^{13} \)C} & & & & & & & \\
\midrule
$\ce{CDCl3}$ / $\ce{DMSO-d6}$ & 326 & 0.3432 & 1.3649 & 0.8286 & 1.7025 & 0.4149 & 1.4474  \\
$\ce{DMSO-d6}$ / $\ce{CDCl3}$ & 326 & 0.4510 & 1.4044 & 0.8501 & 1.6769 & 0.7434 & 1.5806  \\
$\ce{CDCl3}$ / Others & 259 & 0.3622 & 2.3394 & 0.5298 & 2.4386 & 0.3994 & 2.3993  \\
Others / $\ce{CDCl3}$ & 259 & 0.5032 & 1.2410 & 0.5370 & 1.2955 & 0.6525 & 1.0994\\
$\mathrm{DMSO-d}_6$ / Others & 61 & 0.3518 & 0.7595 & 0.8940 & 1.1805 & 0.6525 & 1.0994 \\
Others / $\ce{DMSO-d6}$ & 61 & 0.5214 & 1.0162 & 0.9977 & 1.4678 & 0.7248 & 1.2192 \\
\bottomrule
\end{tabular}
\label{tab:solvent-pair}
\end{table}

\subsection{Per-solvent Performance}
\label{app:scatter}

\begin{figure}[H]
    \centering
    \begin{subfigure}{0.3\textwidth}
        \includegraphics[width=\linewidth]{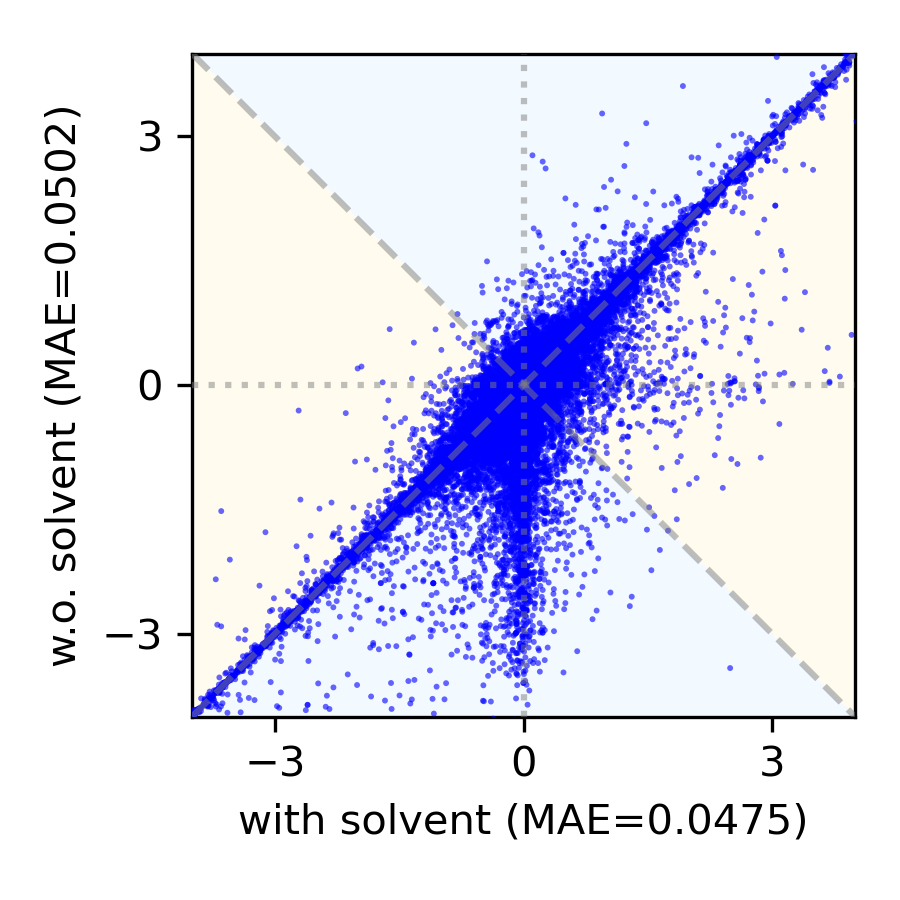}
        \caption{\( ^1 \)H: $\ce{CDCl3}$}
    \end{subfigure}
    \begin{subfigure}{0.3\textwidth}
        \includegraphics[width=\linewidth]{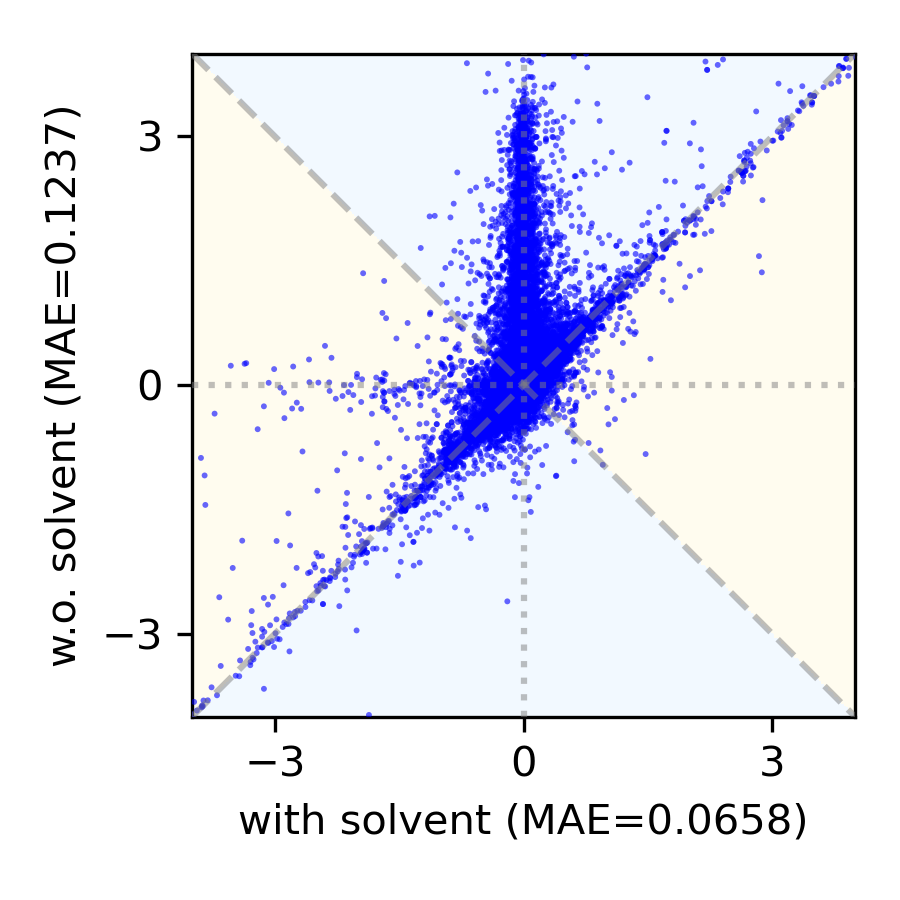}
        \caption{\( ^1 \)H: $\ce{DMSO-d6}$}
    \end{subfigure}
    \begin{subfigure}{0.3\textwidth}
        \includegraphics[width=\linewidth]{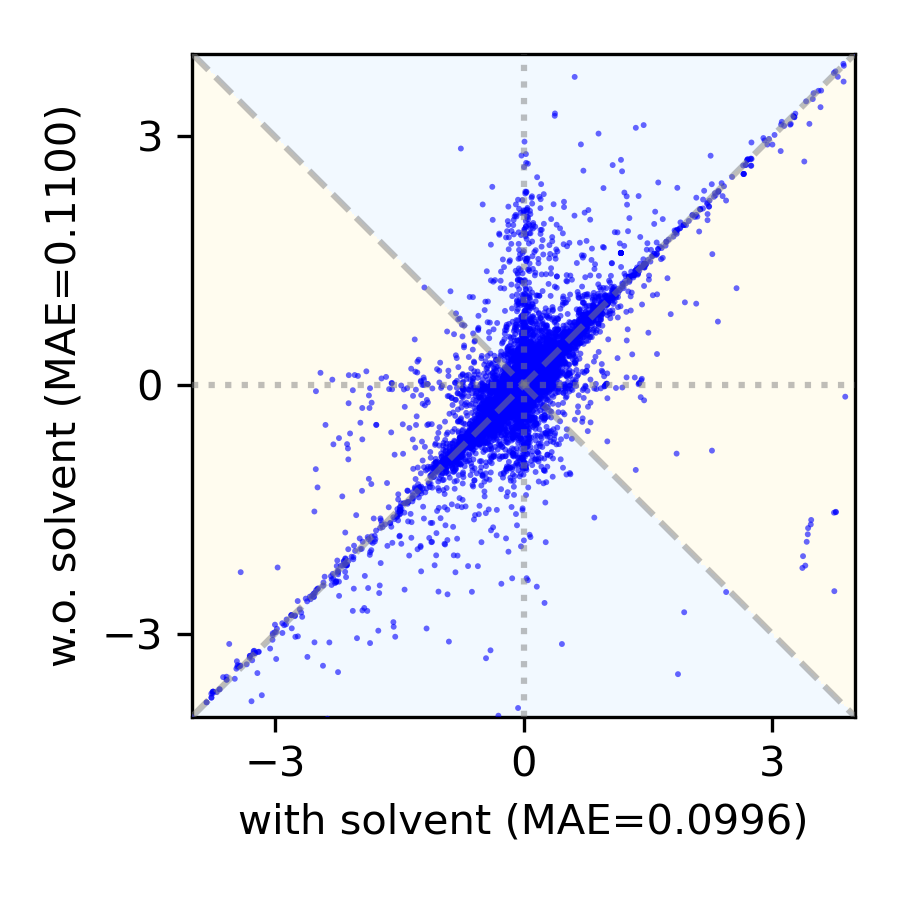}
        \caption{\( ^1 \)H: Others}
    \end{subfigure}

    \begin{subfigure}{0.3\textwidth}
        \includegraphics[width=\linewidth]{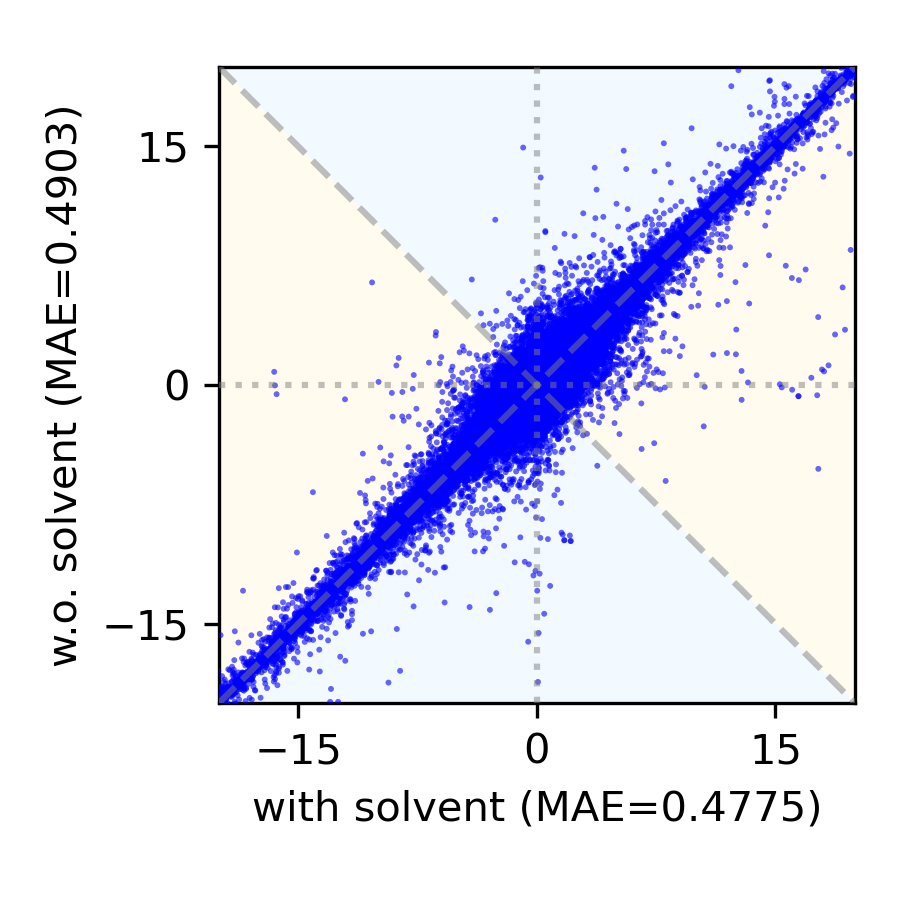}
        \caption{\( ^{13} \)C: $\ce{CDCl3}$}
    \end{subfigure}
    \begin{subfigure}{0.3\textwidth}
        \includegraphics[width=\linewidth]{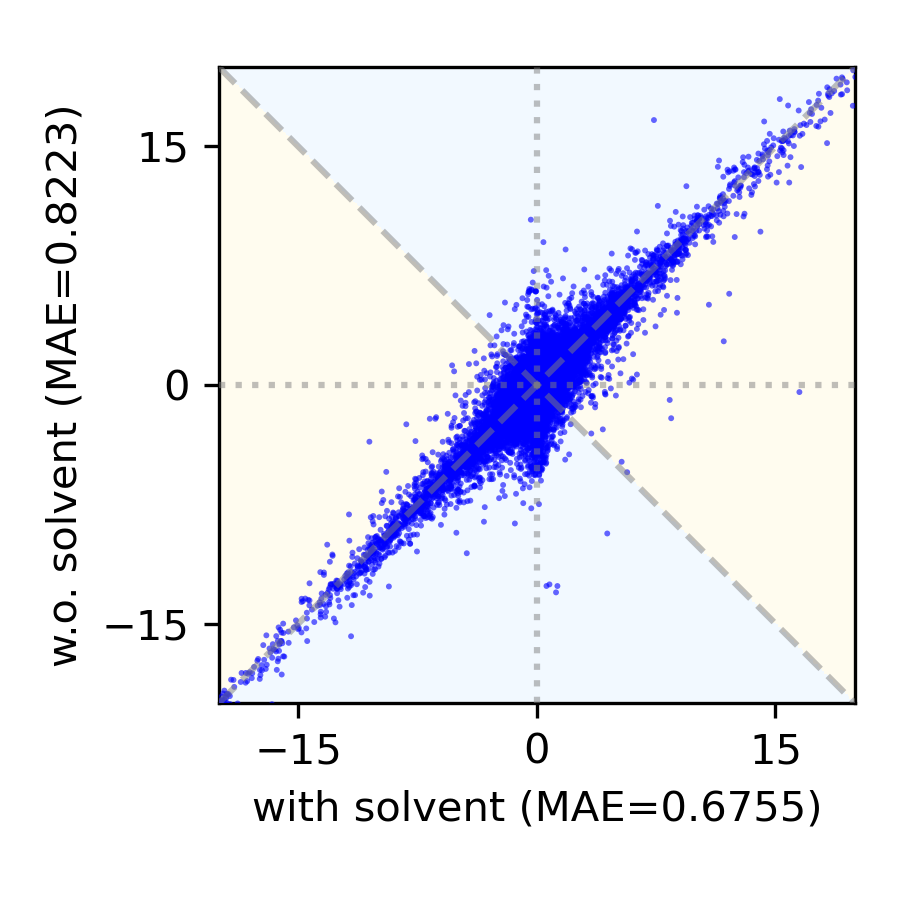}
        \caption{\( ^{13} \)C: $\ce{DMSO-d6}$}
    \end{subfigure}
    \begin{subfigure}{0.3\textwidth}
        \includegraphics[width=\linewidth]{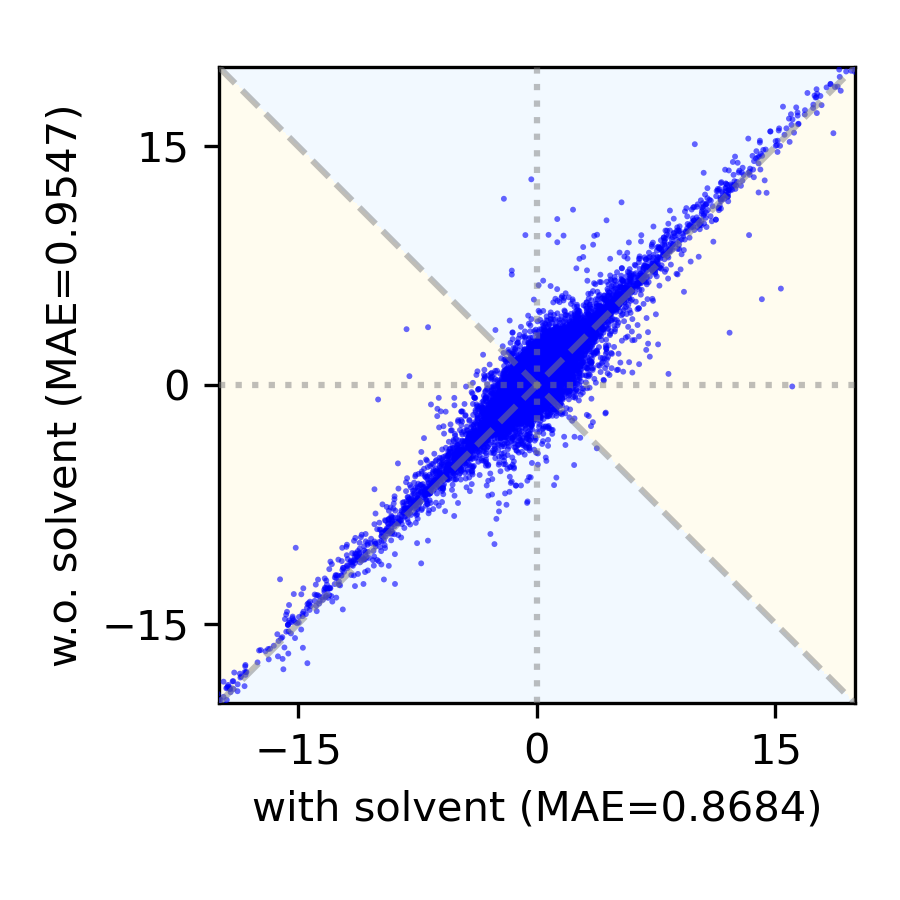}
        \caption{\( ^{13} \)C: Others}
    \end{subfigure}
    \caption{Atom-wise prediction deviations for \(^{1}\mathrm{H}\) and \(^{13}\mathrm{C}\) NMR shifts under different solvent conditions. The two shaded blue regions indicate where solvent conditioning yields lower errors than the solvent-agnostic baseline.}
    \label{fig:solvent-all}
\end{figure}

\section{Proofs in Section~\ref{subsec:loss}} \label{apd:proof}

To prove the equality in~\eqref{eq:loss-mol-2}, we first establish the following lemma.

\begin{lemma}[Monotonicity and Convexity Lemma] \label{lemma}
If \( x_1 \leq x_2 \) and \( y_1 \leq y_2 \), then for a monotonically increasing and convex function \( f(x) \), we have:
\[
f(|x_1 - y_1|) + f(|x_2 - y_2|) \leq f(|x_1 - y_2|) + f(|x_2 - y_1|)
\]
\end{lemma}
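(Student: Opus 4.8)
The plan is to reduce the inequality to a single convexity statement about the auxiliary function $g(t) := f(|t|)$ defined on all of $\mathbb{R}$. The first step is to record that $g$ is convex: for any $t_1, t_2 \in \mathbb{R}$ and $\mu \in [0,1]$, the triangle inequality gives $|\mu t_1 + (1-\mu) t_2| \le \mu |t_1| + (1-\mu)|t_2|$, and applying first the monotonicity of $f$ and then its convexity yields $g(\mu t_1 + (1-\mu)t_2) \le f(\mu|t_1| + (1-\mu)|t_2|) \le \mu g(t_1) + (1-\mu) g(t_2)$. This is the one place where the hypothesis that $f$ is monotonically increasing is genuinely needed: the composition $f \circ |\cdot|$ is convex only because $f$ is both convex and nondecreasing.

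Next I would introduce the four differences $a = x_1 - y_1$, $b = x_2 - y_2$, $c = x_1 - y_2$, $d = x_2 - y_1$, so that the claimed inequality is precisely $g(a) + g(b) \le g(c) + g(d)$. A one-line computation shows $a + b = c + d$, and the orderings $x_1 \le x_2$ and $y_1 \le y_2$ give $c \le a \le d$ and $c \le b \le d$. In other words, $c$ and $d$ are the two extreme values, while $a$ and $b$ lie in the interval $[c,d]$ and have the same sum as its endpoints.

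I would then exhibit $a$ and $b$ as complementary convex combinations of $c$ and $d$. If $c = d$ then all four quantities coincide and the inequality is trivial; otherwise put $\lambda = (d-a)/(d-c) \in [0,1]$, so that $a = \lambda c + (1-\lambda) d$, and because $a + b = c + d$ one gets $b = (1-\lambda) c + \lambda d$ for free. Applying convexity of $g$ to each and adding, the cross terms combine:
\[
g(a) + g(b) \le \bigl(\lambda g(c) + (1-\lambda) g(d)\bigr) + \bigl((1-\lambda) g(c) + \lambda g(d)\bigr) = g(c) + g(d),
\]
which is the lemma. The equivalence in~\eqref{eq:loss-mol-2} then follows by a standard exchange argument: any permutation in $\mathfrak{S}_N$ that matches an out-of-order pair can be strictly improved by the transposition that reorders it, so among the finitely many permutations the cost is minimized by the one that matches sorted predictions to sorted observations.

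The main obstacle here is bookkeeping rather than any real difficulty: one must verify convexity of $g$ from scratch and must handle the sign patterns of $a, b, c, d$ uniformly. Working with the single convex function $g$ on all of $\mathbb{R}$ — instead of splitting into cases according to which of the differences are nonnegative — is what keeps this clean, and it is worth stating explicitly that the absolute value never needs to be "opened."
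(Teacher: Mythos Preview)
Your proof is correct and takes a genuinely different route from the paper's. The paper argues by a case split on whether $y_1$ and $y_2$ straddle the midpoint $(x_1+x_2)/2$: when they do, monotonicity of $f$ alone suffices because each ``crossed'' absolute value dominates the corresponding ``sorted'' one; when they do not, the paper bounds both $|x_1-y_1|$ and $|x_2-y_2|$ by $|x_2-y_1|$ and then combines a convexity estimate with monotonicity to finish. You instead package the two hypotheses into the single fact that $g = f\circ|\cdot|$ is convex on all of $\mathbb{R}$, observe that the pair $(a,b)$ is majorized by $(c,d)$ (same sum, contained in the interval $[c,d]$), and apply convexity once via complementary convex combinations. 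Your version is cleaner and case-free, and it makes explicit why \emph{both} hypotheses on $f$ are needed --- they are precisely what makes $g$ convex; the paper's version is slightly more elementary in that it never names the composite function, at the cost of the midpoint geometry and extra bookkeeping. Your approach is essentially the $n=2$ instance of Karamata's inequality, which is a natural framing here.
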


\begin{proof}

We will prove this lemma by considering two cases based on the relationship between \( y_1 \), \( y_2 \), and the midpoint of \( x_1 \) and \( x_2 \).

\textbf{Case I:} \( y_1 \leq \frac{x_1 + x_2}{2} \leq y_2 \)

In this case, we have:
\[
|x_1 - y_1| \leq |x_2 - y_1| \quad \text{and} \quad |x_2 - y_2| \leq |x_1 - y_2|.
\]
Since \( f(x) \) is monotonically increasing, we can apply the monotonicity of \( f \) to the inequalities above:
\[
f(|x_1 - y_1|) \leq f(|x_2 - y_1|) \quad \text{and} \quad f(|x_2 - y_2|) \leq f(|x_1 - y_2|).
\]
Therefore, we have:
\[
f(|x_1 - y_1|) + f(|x_2 - y_2|) \leq f(|x_1 - y_2|) + f(|x_2 - y_1|).
\]
This satisfies the required inequality for Case I.

\textbf{Case II:} \( y_1 \geq \frac{x_1 + x_2}{2} \) or \( y_2 \leq \frac{x_1 + x_2}{2} \)

Without loss of generality, assume \( y_2 \leq \frac{x_1 + x_2}{2} \). In this case, we have the following relations:
\[
|x_1 - y_1|, |x_2 - y_2| \leq |x_2 - y_1|,
\]
and
\[
|x_1 - y_1| + |x_2 - y_2| \leq |x_1 - y_2| + |x_2 - y_1|.
\]
Since \( f(x) \) is monotonically increasing and convex, we have:
\begin{align*}
f(|x_1 - y_1|) + f(|x_2 - y_2|) & \leq f(|x_2 - y_1|) + f(\max\{|x_1 - y_1| + |x_2 - y_2| - |x_2 - y_1|,0\}) \\
& \leq f(|x_2 - y_1|) + f(|x_1 - y_2|).
\end{align*}
The first inequality follows from the convexity of \( f \), and the second inequality follows from the monotonicity of \( f \) and the earlier established relation.
This satisfies the required inequality for Case II.

\end{proof}

Then, by applying Lemma~\ref{lemma}, we obtain:

\begin{theorem}[Optimal Bipartite Matching for Monotonically Increasing and Convex Loss Functions]

Let \( A = \{a_1, a_2, \dots, a_n\} \) and \( B = \{b_1, b_2, \dots, b_n\} \) be two sets of real numbers, with \( b_1 \le b_2 \le \dots \le b_n \). Let \( f(x) \) be a monotonically increasing and convex function, and define the loss function for a matching between \( A \) and \( B \) as:
\begin{equation*}
L(\sigma) = \sum_{i=1}^{n} f\left( |a_{\sigma(i)} - b_i| \right)
\end{equation*}
where \( \sigma \) is a permutation of the indices \( \{1, 2, \dots, n\} \).

Let \( A' = \{ a_1', a_2', \dots, a_n' \} \) be the sorted version of \( A \), i.e., \( a_1' \le a_2' \le \dots \le a_n' \), and similarly, let \( B' = \{ b_1', b_2', \dots, b_n' \} \) be the sorted version of \( B \), i.e., \( b_1' \le b_2' \le \dots \le b_n' \). Then, the matching where \( a_{\sigma^{*}(i)} = a'_i \) minimizes the loss function \( L(\sigma) \), i.e.:
\begin{equation*}
L(\sigma^*) = \sum_{i=1}^{n} f\left( |a'_i - b'_i| \right) \leq L(\sigma)
\end{equation*}
for any other permutation \( \sigma \).

\end{theorem}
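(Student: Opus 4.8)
The plan is to prove the theorem by a swapping (exchange) argument built directly on Lemma~\ref{lemma}. Since $B$ is already sorted, $b_i' = b_i$, so the assertion is equivalent to the following: a permutation $\sigma$ minimizes $L(\sigma)$ whenever the rearranged sequence $(a_{\sigma(1)},\dots,a_{\sigma(n)})$ is non-decreasing, and any such $\sigma$ attains the value $\sum_{i=1}^n f(|a_i' - b_i|)$.

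First I would show that sorting can only help. Fix an arbitrary permutation $\sigma$ and write $c_m = a_{\sigma(m)}$. If $(c_1,\dots,c_n)$ is not non-decreasing, there is an \emph{adjacent} inversion, i.e.\ an index $i$ with $c_i > c_{i+1}$. Let $\sigma'$ be obtained from $\sigma$ by composing with the transposition that swaps $i$ and $i+1$. Applying Lemma~\ref{lemma} with $x_1 = c_{i+1} \le x_2 = c_i$ and $y_1 = b_i \le y_2 = b_{i+1}$ gives
\[
f(|c_{i+1}-b_i|) + f(|c_i-b_{i+1}|) \;\le\; f(|c_i-b_i|) + f(|c_{i+1}-b_{i+1}|).
\]
The left-hand side is the contribution of positions $i$ and $i+1$ to $L(\sigma')$, the right-hand side their contribution to $L(\sigma)$, and all remaining terms of $L$ are identical; hence $L(\sigma') \le L(\sigma)$.

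Next I would invoke termination. Swapping an adjacent inversion strictly decreases the number of inversions of $(c_1,\dots,c_n)$ by exactly one (the inversion status of every pair of positions other than $\{i,i+1\}$ is unaffected once the contributions involving position $i$ or $i+1$ are telescoped), so after at most $\binom{n}{2}$ such steps the sequence becomes non-decreasing; call the resulting permutation $\sigma^*$. Chaining the inequalities from each step yields $L(\sigma^*) \le L(\sigma)$, and since $a_{\sigma^*(i)} = a_i'$ and $b_i = b_i'$ we obtain $L(\sigma^*) = \sum_{i=1}^n f(|a_i' - b_i'|) \le L(\sigma)$. As $\sigma$ was arbitrary, this is the claimed minimum.

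I expect this to be essentially routine once Lemma~\ref{lemma} is in hand; the only points requiring care are (i) restricting to \emph{adjacent} inversions, so that the inversion count is guaranteed to strictly decrease — swapping a non-adjacent inverted pair need not reduce it — and (ii) noting that ties in $A$ or $B$ cause no difficulty, since $L$ depends only on the multiset of matched pairs and all non-decreasing rearrangements of $A$ give the same value. A shorter alternative is strong induction on $n$: Lemma~\ref{lemma} shows that some optimal matching pairs $\min A$ with $b_1$; deleting that pair and recursing on the remaining $n-1$ elements yields the result with no inversion bookkeeping at all.
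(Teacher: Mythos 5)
Your proposal is correct and takes essentially the same route as the paper: an exchange argument that applies Lemma~\ref{lemma} to uncross inverted pairs until the matching is sorted, with your version merely being more explicit about termination by restricting to adjacent inversions. One small aside: your parenthetical claim that swapping a \emph{non-adjacent} inverted pair need not reduce the inversion count is actually false---any such swap decreases the count by an odd number $\geq 1$---which is implicitly why the paper's looser ``swap any inverted pair'' phrasing also terminates.
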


\begin{proof}

Without loss of generality, assume that both sets \( A \) and \( B \) are sorted, i.e., \( a_1 \leq a_2 \leq \dots \leq a_n \) and \( b_1 \le b_2 \le \dots \le b_n \), and that the matching \( \sigma^* \) is the identity permutation, i.e., \( \sigma^*(i) = i \). We aim to show that the identity matching \( \sigma^* \) minimizes the loss function when compared to any other permutation \( \sigma \).

Consider any permutation \( \sigma \neq \sigma^* \). Since \( \sigma \) differs from \( \sigma^* \), there must exist a pair of indices \( i < j \) such that \( \sigma(i) > \sigma(j) \). In this case, we have \( a_{\sigma(j)} \leq a_{\sigma(i)} \) and \( b_i \leq b_j \). Now, consider the two terms in the loss function corresponding to the mismatched pairs \( (a_{\sigma(i)}, b_i) \) and \( (a_{\sigma(j)}, b_j) \). By Lemma~\ref{lemma}, we have:

\[
f\left( |a_{\sigma(i)} - b_i| \right) + f\left( |a_{\sigma(j)} - b_j| \right)
\geq f\left( |a_{\sigma(j)} - b_i| \right) + f\left( |a_{\sigma(i)} - b_j| \right)
\]

This inequality shows that swapping the indices \( \sigma(i) \) and \( \sigma(j) \) in the matching does not increase the overall loss. 

By repeatedly applying this swapping process, we can transform the matching \( \sigma \) into \( \sigma^* \) without increasing the loss at any step. Therefore, the identity matching \( \sigma^* \) is optimal and minimizes the loss function \( L \).

\end{proof}

\end{document}